\documentclass{article}

\usepackage{microtype}
\usepackage{graphicx}
\usepackage{subfigure}
\usepackage{booktabs}

\usepackage{hyperref}

\usepackage[accepted]{icml2023}

\usepackage{amsmath}
\usepackage{amssymb}
\usepackage{mathtools}
\usepackage{amsthm}

\usepackage[capitalize,noabbrev]{cleveref}

\theoremstyle{plain}
\newtheorem{theorem}{Theorem}[section]

\theoremstyle{definition}
\newtheorem{definition}[theorem]{Definition}

\theoremstyle{remark}

\usepackage[textsize=tiny]{todonotes}

\usepackage{comment}
\usepackage{hyperref}
\usepackage{url}
\usepackage{textcomp}
\usepackage{gensymb}
\usepackage{amsmath}
\usepackage{amssymb}
\usepackage{mathtools}
\usepackage{amsthm}

\usepackage{thmtools}
\usepackage{thm-restate}
\usepackage{wrapfig}
\usepackage{graphicx}
\usepackage{stmaryrd}
\usepackage{booktabs}
\usepackage{color}
\usepackage{xcolor}
\makeatletter

\definecolor{myred}{RGB}{140, 0, 0}
\definecolor{myblue}{RGB}{0, 27, 155}
\definecolor{myviolet}{RGB}{136, 46, 189}

\newtheorem*{rep@theorem}{rep@title}
\newcommand{\newreptheorem}[2]{%
\newenvironment{rep#1}[1]{%
 \def\rep@title{#2 \ref{##1}}%
 \begin{rep@theorem}}%
 {\end{rep@theorem}}}
\makeatother

\newtheorem{prop}{Proposition}
\newreptheorem{prop}{Proposition}

\declaretheorem[name=Assumption]{assum}

\usepackage{hyperref}

\newcommand{\RR}{\mathbb{R}}

\icmltitlerunning{Homomorphism Autoencoder}

\begin{document}

\twocolumn[
\icmltitle{Homomorphism Autoencoder --- Learning Group Structured Representations from Observed Transitions}

\icmlsetsymbol{equal}{*}

\begin{icmlauthorlist}
  \icmlauthor{Hamza Keurti}{mpi,ini,cls} \icmlauthor{Hsiao-Ru Pan}{mpi}  
  \icmlauthor{Michel Besserve}{mpi}
  
  \icmlauthor{Benjamin Grewe}{ini} \\
  \icmlauthor{Bernhard Sch\"olkopf}{mpi}
\end{icmlauthorlist}

\icmlaffiliation{mpi}{Max Planck Institute for Intelligent Systems, T\"{u}bingen, Germany}
\icmlaffiliation{ini}{Institute of Neuroinformatics, ETH Z\"urich, Switzerland}
\icmlaffiliation{cls}{Max Planck ETH Center for Learning Systems}

\icmlcorrespondingauthor{Hamza Keurti}{hamza.keurti@tuebingen.mpg.de}

\icmlkeywords{Machine Learning, ICML}

\vskip 0.3in
]

\printAffiliationsAndNotice{} 

\begin{abstract}
How can agents learn internal models that veridically represent interactions with the real world is a largely open question. As machine learning is moving towards representations containing not just observational but also interventional knowledge, we study this problem using tools from representation learning and group theory. 
We propose methods enabling an agent acting upon the world to learn internal representations of sensory information that are consistent with actions that modify it.
We use an autoencoder equipped with a group representation acting on its latent space, 
trained using an equivariance-derived loss in order to enforce a suitable homomorphism property on the group representation. 
In contrast to existing work, our approach does not require prior knowledge of the group and does not restrict the set of actions the agent can perform. 
We motivate our method theoretically, and show empirically\footnote{Code can be found at \url{https://github.com/hamzakeurti/homomorphismvae}} that it can learn a group representation of the actions, thereby capturing the structure of the set of transformations applied to the environment. We further show that this allows agents to predict the effect of sequences of future actions with improved accuracy. 
\end{abstract}

\section{Introduction}
One of the most enigmatic questions addressed by mammalian intelligence is how to build an internal model of the external world that represents all behavior-relevant information to efficiently anticipate the consequences of future actions. 
Humans acquire such internal models by interacting with the world, but the learning principles allowing it remain elusive. 
We investigate how Machine Learning (ML) can shed light on this question, as it moves towards representations that carry more than just observational information \citep{Sutton1998,Scholkopfetal21} and develops tools for interactive and geometric structure learning \citep{CohenW16a, AliEslami2018}, 
 
Our setting is inspired by neuroscientific evidence that, as animals use their motor apparatus to act, efference copies of motor signals are sent to the brain's sensory system where they are integrated with incoming sensory observations to predict future sensory inputs \citep{Keller2012}. 
We argue that such efference copies can be useful for learning {\em structured} latent representations of sensory observations and for disentangling the key latent factors of behavioral relevance. 
This view is also in line with hypotheses formulated by developmental psychology \citep{Piaget1964a}, stating that perceiving an object is not creating a mental copy of it but rather internalizing an understanding of how this object transforms under different interventions. 
We translate this idea into an artificial setting in which an agent has to build an internal representation of a novel environment through an interactive process. 
To intervene on its environment, the agent is allowed to perform a set of transformations, while observing the impact of its actions. The agent is provided with a value of the efference copy associated with each transformation, but does not have prior knowledge on how they are structured and how they affect the environment.

In reinforcement learning (RL), interaction probes the structure of the environment and can be leveraged to learn predictive world models that capture the effect of actions \citep{ha2018world}. 
While RL heavily relies on a reward signal, this does not cover all forms of human learning, and other objectives compatible with biological evidence have been proposed, such as \textit{predictive coding} \cite{rao1999predictive,pezzulo2022evolution}. Our approach, too, aims to complement rather than utilize RL. 

We will aim to learn a type of correspondence (mathematically, a homomorphism) between the interventional structure of the world and our representation thereof. To this end, we make the assumption that an agent's actions form a group (or a subset thereof) where actions are composable. 
Further, for learning the environmental transformations caused by its actions, the agent needs to learn a sensory latent space consistent with its observations. 
Learning action-observation relationships then amounts to predicting the change in the agent's latent sensory space using the incoming motor commands. 
Mathematically, this corresponds to an equivariance property of the sensory representations with regard to the group acting on the environment \citep{Dodwell1983}.

While one can mathematically create many such representations, an agent with bounded computational abilities needs to choose one allowing efficient manipulation, interpretation and prediction of changes in its environment. Group theory suggests that transformations can be efficiently encoded as matrices, notably through the concept of \textit{group representation}. In an appropriately chosen latent space, transformations of world states can be efficiently encoded as matrix-vector products, and composed using matrix multiplication.
Additionally, a representational property favoring efficiency is disentanglement \citep{Bengio2012,kulkarni2015deep,Scholkopfetal21}, 
which states that the latent representation can be decomposed into subspaces reflecting properties of the environment that the agent can modify independently. Disentanglement can be framed in a group theoretic setting by imposing a block diagonal structure on the group representation \cite{Higgins2018}, thereby promoting a parsimonious encoding of the group structure.

While mathematical results state that such a latent representation, where group elements act as matrices, exists under mild conditions on the group action \citep{antonyan2009linearization,kraft2014families}, how to learn it from data without prior knowledge of the group structure is an open question. 
In the present work, we investigate this question when observation-action sequences carried out by an agent embedded in an environment can be observed. 
Our approach only makes minimal assumptions with respect to the latent structure to be learned. 
In particular, instead of directly enforcing a parametrized representation of a particular group, we allow for arbitrary mappings to the space of linear maps.
Our mapping is naturally shaped into a group representation by observing how actions affect future percepts.
We propose a training loss derived from the commutative diagram that an equivariant map satisfies, and a predictive autoencoder termed {\em Homomorphism AutoEncoder (HAE)} trained on our equivariance constraints.
We also propose a sparsity regularization which favors disentangled representations.

Our contributions can be summarized as follows:
\vspace*{-.1cm}
\begin{itemize}
\itemsep0em
\parskip0em
    \item We propose the HAE framework to jointly learn $(\rho,h)$ a group representation of transitions, as well as a symmetry-based disentangled representation of observations without prior knowledge on the group. 
    
    \item We provide theoretical justification and experimental evidence that the HAE learns the group structure of the transitions for different groups. 
    \item The HAE learns to separate the pose of an object with regard to the transitions group acting on it, from the identity of acted-on objects as orbits of the non-transitive group action.
    \item We show performance exceeding previous approaches despite not using prior knowledge on the group structure to be learned. 
\end{itemize}

\paragraph{Related Work.}

Learning useful representations has been addressed with a wide range of methods. Given only unlabelled observational data, generative models 
have difficulties to infer ground truth disentangled factors without additional assumptions on the mechanisms \citep{Locatello2019}. 
In the i.i.d.\ data setting, constraining the function class mapping the true latent factors to the observations can help identify these factors \citep{gresele_independent_2021}. 
Access to non-i.i.d.\ data, e.g., from multiple environments, has been shown to allow uncovering ground truth latent factors, notably through contrastive learning approaches \citep{hyvarinen_unsupervised_2016,khemakhem_variational_2020,von_kugelgen_self-supervised_2021}.  
This is also in line with methods relying on interventional data to provide more information about representational structure. 

For instance, \citet{sontakke2021causal} propose disentangling binary factors of variations by learning interactions which best separate sensory trajectories. 
\citet{Thomas2017} propose a specific objective function that leads agents to learn policies that separately act on the disentangled properties of the environment.

Homomorphic MDPs \cite{vanderpol20b} propose equivariant abstractions of states based on the similarity of both transitions and rewards.

There is as well a growing interest in group theory to enforce structure in learned representations. 

Group invariance has been suggested as a useful inductive bias to learn causal structure \citep{besserve2018aistats,Besserve_Sun_Janzing_Scholkopf_2021}. 
While CNNs \citep{LeCun1989} leveraged the linear action of the translation group on the image space, 
this idea was generalized to other groups \citep{CohenW16a} or on other data modalities, 
leading to the design of appropriate group equivariant linear maps, and theoretical characterizations 
\cite{Kondor2018}.

\citet{Dehmamy2021,Zhou2020,Finzi2021} additionally propose to learn group-equivariant maps from the data instead of imposing a particular convolution pattern. 
A limitation of these works is that they learn symmetries encoded as linear (matrix) actions in the measurement space, while many relevant transformations of the world can only take such form in a well chosen latent representation of it, and the non-linear mapping between this representation and the observations typically needs to be learned as well. 
\citet{Higgins2018} propose a theoretical framework for such \textit{group-structured representation} to propose a mathematical definition of disentanglement. 

Several works describe approaches to learn these group-structured representations  \citep{Tao2022,Caselles-Dupre2019,Quessard2020,park2022learning}, to assess them \citep{Tonnaer2020} and to infer transformations between pairs of observations \citep{Painter20,Connor2020}. 
Inspiring our approach, \citet{Caselles-Dupre2019,Quessard2020} derive optimization losses from equivariance requirements, and \citet{Connor2020} leverage the Lie Algebra and the matrix exponential. 
All these works constrain the transformations observed during training to be restricted to certain subgroups, 
thereby assuming the agent has a preference for separately acting on specific aspects of the state. 
With the exception of \citet{Quessard2020}, all these works also constrain the group to be a product of cyclic groups. 
In contrast, our work does not require prior knowledge on the group structure, nor does it

restrict the set of allowed transformations between pairs of observations.

\section{Background}\label{sec:background}
\begin{figure}[t]
    \vspace{0pt}
    \centering
    \includegraphics[width=0.45\linewidth]{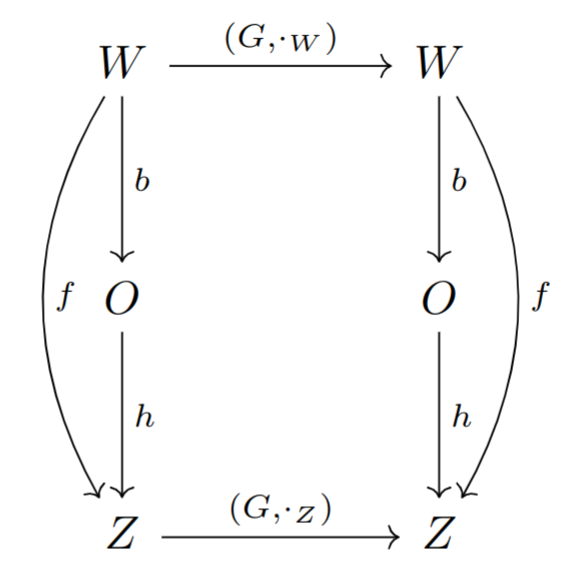}
    \includegraphics[width=0.35\linewidth]{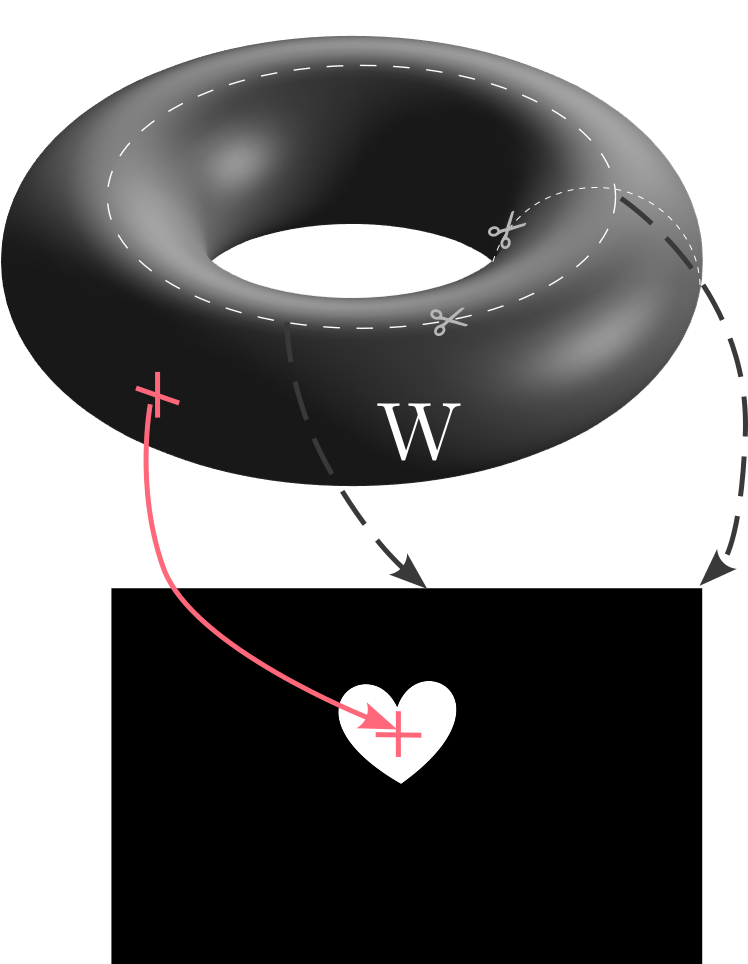}
    \vspace{-5pt}
    \caption{\textbf{Left:} Commutative diagram of a group-structured representation. \textbf{Right:} Toroidal latent world for a moving 2D object observation, parameterized by the coordinates of the objects' center (red). Dashed lines describe the unwrapping process. 
    }
    \vspace{-.5cm}
    \label{fig:comdiag1}
\end{figure}
\subsection{Symmetry-Based Disentangled Representation Learning}\label{sec:sbdrl}
We refer to Appendix~\ref{app:grp} for background on groups. We assume the set of observations $O\subset \RR^{n_o}$ is obtained from a space of world states $W$ through an unseen \textit{generative process} $b:W\rightarrow O$.
For example, consider translations of the heart-shaped 2D object on an image represented in Figure~\ref{fig:comdiag1}. If we allow translations leaving the image to move the object to the opposite side, $W$ corresponds to a representation of the space of positions of the heart's center on a torus, that can be identified with $[0,1]\times [0,1]$, and we observe pixel images $o\in O = \RR^{64\times 64}$ (see~Figure~\ref{fig:approach}) of this torus by unwrapping it, opposite borders of this image coincide on the original torus.
An \textit{inference process} $h:O\rightarrow Z$ maps observations to their vector representations $z\in Z=\RR^D$.

A group of symmetries $G$ structures the world states by its action $\cdot_W:G\times W \rightarrow W$. $G$ is decomposed into a direct product of subgroups $G = G_1 \times ... \times G_{n}$. Here, each subgroup only transforms a specific latent property while keeping all others constant.
In the example, the 2D toroidal group $G \simeq SO(2)\times SO(2)$ acts on the heart by translating it on the torus through the action: $ g_{\theta_1,\theta_2}\cdot_W~w~=~(\frac{\theta_1}{2\pi}~+~w_1,\frac{\theta_2}{2\pi}+w_2) \mod 1$. 
These transformations are reflected in the image space by a shift of pixel activations (see Figure~\ref{fig:approach}) until it reaches the border after which it appears on the other side.

We call a representation {\em group-structured} (cf. \citealt{Higgins2018})

if it satisfies the commutative diagram in Figure~\ref{fig:comdiag1} such that:
\begin{enumerate}
    \item There is an action of $G$ on $Z$: $\quad
            \cdot_Z: G\times Z \rightarrow Z$.
    \item The composition $f = h \circ b: W \rightarrow Z$ is equivariant, meaning that transformations of $W$ are reflected on $Z$, i.e., $\forall g\in G, w\in W,\quad f(g\cdot_W w) = g\cdot_Z f(w).$
\end{enumerate}

A group action $\cdot_Z:G\times~Z\rightarrow~Z$ induces a group homomorphism $\rho:G\rightarrow Sym(Z)$ where $Sym(Z)$ is the group of invertible mappings from $Z$ to itself (more in Appendix~\ref{app:grp}). 
We require the group action $\cdot_Z$ on $Z$ to be linear, in which case the induced homomorphism $\rho:G\rightarrow GL(Z)$ is called a group representation (here, $GL(Z)$ is the group of invertible linear maps on the vector space $Z$).

We call a group-structured representation {\em disentangled} with regard to the group decomposition $G=G_1\times ... \times G_n$ if it satisfies this additional condition:
\begin{enumerate}
    \setcounter{enumi}{2}
    \item  There exists a decomposition $Z = Z_1 \oplus ... \oplus Z_n$ and a decomposition of the group representation $\rho = \rho_1 \oplus ... \oplus \rho_n$ where each $\rho_i:G_i \rightarrow GL(Z_i)$ is a subrepresentation.
\end{enumerate}

The action on $Z$ can then be written \begin{eqnarray}
    g\cdot_Z z &=& \rho(g_1,...,g_n)(z_1 \oplus ... \oplus z_n) \nonumber\\ &=& \rho_1(g_1)z_1 \oplus ... \oplus \rho_n(g_n)z_n , 
\end{eqnarray}
for $g = (g_1,...,g_n)\in G$  and $z = z_1 \oplus ... \oplus z_n \in Z$.
Clearly, each subgroup $G_i$ acts trivially on $Z_j,\,j\neq i$.

By choosing the mapping $h$ such that $f=h\circ b$ satisfies $f(w)~=~(\cos(2\pi w_1),\sin(2\pi w_1),\cos(2\pi w_2),\sin(2\pi w_2))^T$, and the block-diagonal group representation $\rho$ such that:
\begin{equation}\label{eq:expectedrho}
\rho=\rho_1\oplus \rho_2; \quad
\rho_i:g_{\theta_1,\theta_2}\mapsto \begin{pmatrix}
    \cos(\theta_i) & -\sin(\theta_i)\\
    \sin(\theta_i) & \cos(\theta_i) \\
\end{pmatrix}\,,
\end{equation}
we obtain a linear group-structured representation $(\rho,h)$ disentangled with regard to a decomposition of the group acting on the latent 2-torus. In Section~\ref{sec:exp_torus}, we learn such a representation using the HAE described in Section~\ref{sec:hae}.

\subsection{Lie Groups and the Exponential Map}\label{sec:expmap}

A {\em Lie group} is a group $G$ that is also a finite-dimensional differentiable manifold with smooth composition and inverse.
Its tangent space at the identity is called a {\em Lie Algebra} $\mathfrak{g}$.
An interesting property of Lie groups is the existence of the \textit{exponential map} which lets us generate elements of $G$ from those of $\mathfrak{g}$. 
Restricting ourselves to connected compact \textit{matrix Lie Groups}, the exponential map becomes surjective; it is defined by the series $\exp(A) = \sum_{k=0}^\infty \frac{1}{k!} A^k$.

We leverage this link between the Lie Group and its Algebra to use a parametrization of group representations 
${\rho:G\to GL(Z)}$ 
as the composition 
${\rho = \exp \circ \phi}$ of ${exp:\mathfrak{gl}(Z)\to GL(Z)}$ that maps square matrices to invertible matrices and 
${\phi:G\to \mathfrak{gl}(Z)}$ which maps transition signals to square matrices. This parametrization of $\rho$ allows flexibility for the group $G$ and provides optimization advantages. We detail the derivation of this composition as well as its advantages in the Appendix~\ref{app:rhophi}.

\section{Approach and Experimental Setup}\label{sec:hae}
\begin{figure}[ht]
    \vspace{-4pt}
    \centering
    \includegraphics[width=0.95\linewidth]{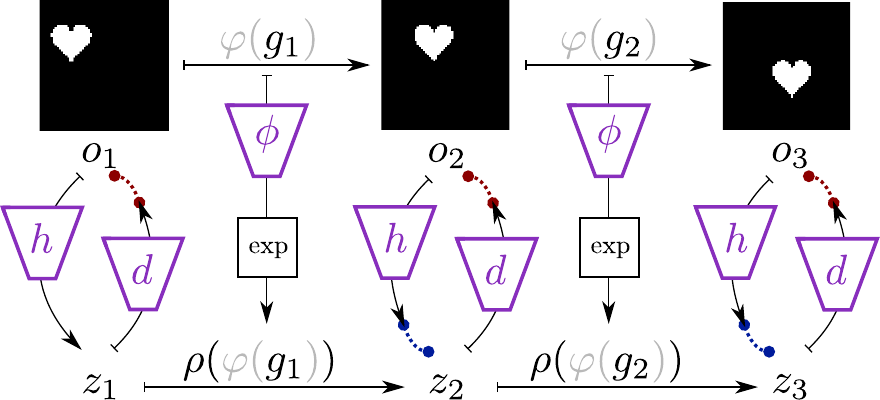}
    \vspace{-5pt}
    \caption{The Homomorphism Autoencoder (HAE) consisting of $h$ (encoder), $d$ (decoder) and $\rho=\exp \circ \, \phi$ (group representation) relies on 2-step latent prediction to jointly learn the group representation $\rho$ and the observation representation $h$. The HAE learns by jointly minimizing both the \textbf{\textcolor{myblue}{latent prediction loss}} and the \textbf{\textcolor{myred}{reconstruction loss}} (dotted connections) to simultaneously learn representations of the observations and the group actions.}
    \vspace{-4pt}
    \label{fig:approach}
\end{figure}

\subsection{The Agent Observes Structure through Interaction}

In order to discover the underlying structure relating observations, we consider an agent that interacts with its environment and produces rollouts $(o_1, g_1, o_2, g_2,..., g_{N-1},o_N)$ of observations and group elements which transform the underlying latent state.
The agent does not directly have access to the group elements $g$, but to unstructured efference copies $\varphi(g)$ of its performed actions,
which, we assume, are related to $g$ through an unknown fixed injective mapping $\varphi$. 
We use a random neural network for $\varphi$ in the experiments.
To avoid cluttering, we write $g$ instead of $\varphi(g)$ in the rest of the paper.

Unlike other works \citep{Quessard2020,Painter20,Caselles-Dupre2019}, we do not assume the agent knows how to act specifically along subgroups of the decomposition.
The agent explores its action-perception dependencies through a random policy, where it samples actions from a neighbourhood of the identity. 
Finally, the agent does not have knowledge of the composition rules of the group element and only witnesses side effects of composition through the succession of observations.

Through our proposed Homomorphism Autoencoder, the agent will learn a group-structured representation $(\rho,h)$ that is disentangled with respect to a decomposition of the underlying group $G$.

\subsection{Homomorphism Autoencoder (HAE) Architecture}

To jointly learn the latent representation $h$ of the observations and the group representation $\rho$, we introduce the HAE, described in Figure~\ref{fig:approach}. This is a deterministic autoencoder with encoder $h$ and decoder $d$, endowed with a group representation $\rho:G\rightarrow GL(Z)$ parametrized by a neural network $\phi$ as described in Section~\ref{sec:expmap}. $\rho=\exp\circ \phi$ acts on encoding vectors of observations $z_t = h(o_t)$ to predict the encoding of future stimuli through ${\rho(g_t)\cdot z_t \approx z_{t+1}}$. 
The latent prediction is evaluated on both the latent space through the $2$-step latent prediction loss $\mathcal{L}^2_{pred}$ and on the image space through the $2$-step reconstruction loss $\mathcal{L}^2_{rec}$. We optimize $\mathcal{L}$, a weighted sum of both losses, setting their relative importance with coefficient $\gamma$, yielding equation~\ref{eq:losses}. 
The losses are separately detailed in the section \ref{sec:theory}.
\begin{equation}\label{eq:losses}
    \mathcal{L}(\rho,h,d) = \mathcal{L}^2_{rec}(\rho, h,d) + \gamma  \mathcal{L}^2_{pred}(\rho,h)\,.
\end{equation}

\section{The HAE Learns Symmetry-Based Representations}\label{sec:theory}

Previous attempts to design symmetry-based disentangled linear representations have put a lot of emphasis on the disentanglement property. 
However, it remains unclear how to learn a symmetry-based linear representation $(\rho,h)$ that verifies properties 1 and 2 in Section~\ref{sec:sbdrl}, without enforcing strong assumptions on $\rho$ \citep{Caselles-Dupre2019} or on the actions the agent can perform \citep{Caselles-Dupre2019,Quessard2020}. 

In this section, we provide theoretical insights on learning symmetry-based representations and how the two-step HAE architecture achieves it with minimal assumptions.
We define the losses used throughout for a given sample path $(o_1,g_1,\dots,o_N)$.

\textbf{The $N$-step latent prediction loss} 
compares the evolution of stimuli encodings predicted by using the group representation action against the encodings of the corresponding observations (blue dotted connection in Figure~\ref{fig:approach}):

\[
\mathcal{L}_{pred}^N (\rho,h)\!=\!\! \sum_{t=2}^{N+1}  
\left\|h(o_{t}) \!-\! \Big(\prod_{i=1}^{t-1}\rho(g_{i})\Big)h(o_1)\right\|_2^2 
\]

\textbf{The $N$-step reconstruction loss} 
compares the reconstructions of the stimuli obtained from decoding the predicted evolution of encodings  against the actual stimuli (red dotted connection in Figure~\ref{fig:approach}).
The reconstruction loss also evaluates the reconstruction of the initial observation like a standard autoencoder,
\[
\mathcal{L}_{rec}^N(\rho,h,d)\! = \!\! \sum_{t=1}^{N+1} 
\left\|o_{t} \!-\! d\left( \!\Big(\prod_{i\geq 1}^{t-1}\rho(g_{i})\Big)h(o_1)\!\right)\right\|_2^2\,,
\]
where by convention the empty product for $t=1$ is 1. 

In both losses, the Euclidean norm may be replaced by other positive functions. In practice, we use the Euclidean norm for $\mathcal{L}_{pred}$, but use binary cross-entropy for $\mathcal{L}_{rec}$, which is a common choice for image data.
The $1$-step latent prediction loss $\mathcal{L}_{pred}^1$ is simply enforcing the commutative diagram in Figure~\ref{fig:comdiag1}, and $N$-step losses allow us to extend the commutative diagram to multi-step settings.

To provide guarantees for our approach, we make minimal assumptions on the world states symmetries. 
\begin{assum}[Standard action on world states]\label{assum:linearworld}
    $G$ and $W$ are compact smooth manifolds and there exists a map $m:W\to W^*$diffeomorphic onto its image, where $W^*$ is a finite dimensional real vector space such that $G$ admits a continuous injective group representation $\rho^*: G\to GL(W^*)$ and the action of $G$ on $W$ corresponds to the matrix-vector multiplication by $\rho^*$ on $W^*$: $g\cdot_W w =m^{-1}\left(\rho^*(g) m(w)\right)$. 
\end{assum}
This technical assumption encompasses a wide range of transformations of the physical world that can be modelled by linear actions of matrix groups such as $SO(n)$ in an appropriate latent space. It should {\em not} be misunderstood as making any linear approximation of non-linear functions. In particular, the generative process $b$ can be highly non-linear. Compactness allows defining a unique ``uniform'' probability measure on the group and to make sure all world states can be explored with suitable sample paths. Introducing the mapping to $W^*$ allows modelling general linear actions on $W$ as matrix vector products.\footnote{E.g., the conjugate action $MAM^\top$ of matrix $M$ on matrix $A$ can be represented by the vectorization operation $vec$ and the Kronecker product $\otimes$, such that $vec(MAM^\top)=(M\otimes M) vec(A)$ 
} Mild sufficient conditions under which Assumption~\ref{assum:linearworld} holds are given in \citep{antonyan2009linearization,kraft2014families}. 

\paragraph{Theoretical results (see Appendix~\ref{app:proofs}).} If we assume such $\rho^*$ is given, then minimizing $\mathcal{L}_{pred}^1 (\rho^*,h)$ is enough to learn a symmetry-based representation (see Proposition~\ref{prop:gmap} in Appendix~\ref{app:proofs}).

However, when $\rho^*$ is not known, a group representation $\rho$ of $G$ needs to be learned over a space of arbitrary mappings, minimizing $\mathcal{L}_{pred}^1 (\rho,h)$ and can lead to the trivial representation (see Proposition~\ref{prop:trivial} in Appendix~\ref{app:proofs}).

Our main result justifies the use of a 2-step latent prediction loss, to which the observations reconstruction loss should be added, to provide guarantees for the HAE to learn a symmetry-based representation. The proof is in Appendix~\ref{app:proofs}.

{
\renewcommand{\theprop}{\ref{prop:twosteps}}

\begin{prop}[informal]
Under generative model of Section~\ref{sec:sbdrl} with $b$ diffeomorphic onto its image and Assumption~\ref{assum:linearworld}, consider a setting where sample paths have a strictly positive density on a $G$-invariant support. 
If $(\rho,h,d)$ are continuous and minimize the expectation of $\mathcal{L}_{pred}^2(\rho,h) + \gamma \mathcal{L}_{rec}^k(\rho,h,d)$, for $k\geq 0$, then $\rho$ is a non-trivial group representation and $(\rho,h)$ is a symmetry-based representation. 

\end{prop}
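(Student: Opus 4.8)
The plan is to show that the global optimum of the expected objective equals zero and is realised by a genuine symmetry-based representation, and then to read off the defining properties of such a representation from the vanishing of the (non-negative, continuous) integrand on the support. Throughout write $o_t=b(w_t)$ with $w_{t+1}=g_t\cdot_W w_t$, $f=h\circ b$, and let $V\subseteq G$ be the identity neighbourhood from which single actions are sampled. First I would argue that the optimum is zero by exhibiting a minimiser. Using Assumption~\ref{assum:linearworld}, define $h=m\circ b^{-1}$ on the image of $b$ (legitimate since $b$ is diffeomorphic onto its image) and extend it continuously, so $f=m$; set $d=b\circ m^{-1}$ on the image of $m$; and realise $\rho=\rho^*$ through $\rho=\exp\circ\phi$ by taking $\phi=\log\circ\,\rho^*$ on $V$, where $\rho^*$ is near the identity and the matrix logarithm is well defined. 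Since $\rho^*$ is a representation and $g\cdot_W w=m^{-1}(\rho^*(g)m(w))$, every term of $\mathcal{L}^2_{pred}$ and $\mathcal{L}^k_{rec}$ vanishes along every path; as both losses are non-negative, the infimum is $0$ and any minimiser attains it.

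Because the integrand is continuous, non-negative, and the path distribution has strictly positive density on a $G$-invariant support, a zero expectation forces the integrand to vanish on the whole support. The two prediction terms and the reconstruction term then give, for every admissible triple $(w_1,g_1,g_2)$,
\begin{align}
f(g_1\cdot_W w_1)&=\rho(g_1)\,f(w_1),\label{eq:hae-onestep}\\
f(g_2 g_1\cdot_W w_1)&=\rho(g_2)\rho(g_1)\,f(w_1),\label{eq:hae-twostep}\\
d\big(f(w_1)\big)&=b(w_1).\label{eq:hae-recon}
\end{align}
From \eqref{eq:hae-recon}, already for $k=0$, $d$ is a left inverse of $h$ on reachable observations, so $h$ is injective there and $f$ is injective on the support states. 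Since the action of $G$ on $W$ is faithful ($\rho^*$ injective) and $V$ generates the connected group $G$, some $g_1\in V$ moves some reachable $w_1$; injectivity of $f$ and \eqref{eq:hae-onestep} then give $\rho(g_1)f(w_1)=f(g_1\cdot_W w_1)\neq f(w_1)$, so $\rho(g_1)\neq I$ and $\rho$ is non-trivial.

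Finally I would establish the homomorphism and global equivariance. Equation \eqref{eq:hae-onestep} is exactly one-step equivariance for $g_1\in V$, while \eqref{eq:hae-twostep}---the genuinely new content of the two-step loss---couples composition in $G$ to matrix multiplication in $GL(Z)$ and is what excludes the non-homomorphic minimisers that the one-step loss alone permits (cf.\ Proposition~\ref{prop:trivial}). Restricting to $U=\mathrm{span}\,f(W)$, which \eqref{eq:hae-onestep} shows is invariant, the image of $f$ spans $U$. Because $V$ generates $G$ and the support is $G$-invariant, every $g\in G$ factors as $g=g_k\cdots g_1$ with $g_i\in V$ and every intermediate state $g_i\cdots g_1\cdot_W w$ is reachable; chaining \eqref{eq:hae-onestep} through these states yields $f(g\cdot_W w)=\rho(g_k)\cdots\rho(g_1)f(w)$. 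Spanning and injectivity of $f$ make the operator $T(g):=\rho(g_k)\cdots\rho(g_1)|_U$ independent of the factorisation (two factorisations agree on every $f(w)$, hence on $U$), so $T$ is a well-defined homomorphism with $T|_V=\rho|_V$ under which $f$ is equivariant; since a representation of a connected group is determined by its germ on $V$, we identify this with the learned $\rho$, and Properties~1 and~2 of Section~\ref{sec:sbdrl} hold. I expect the main obstacle to be precisely this local-to-global step---showing that the operator induced by chaining one-step equivariance is factorisation-independent and hence a bona fide homomorphism---where injectivity of $f$ (supplied by the reconstruction loss) and the $G$-invariance of the support are indispensable; a secondary technical point is confirming that the effective latent space $U$, rather than all of $Z$, carries a well-defined non-trivial representation.
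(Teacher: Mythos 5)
Your steps~1--3 track the paper's proof closely: you exhibit the zero of the objective via $(\rho^\ast,\, m\circ b^{-1},\, b\circ m^{-1})$, use continuity plus strictly positive density to force the integrand to vanish pointwise on the support, and extract injectivity of $h$ from the $0$-step reconstruction term (this is the paper's Proposition~\ref{prop:inject}); your subspace $U=\mathrm{span}\,f(\mathrm{supp}\,\mu_i)$ and its stability argument also coincide with the paper's construction of the invariant subspace on which $\rho$ restricts to a representation. The divergence is in the homomorphism step, and it rests on a misreading of the hypothesis. You posit that single actions are sampled only from an identity neighbourhood, whereas the proposition's ``strictly positive density'' assumption, made precise in the formal statement, is that the state-action density has support $S_i\times G\times G$ with respect to $\mu\times\mu_G\times\mu_G$: actions range over the \emph{whole} group. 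Under that hypothesis the homomorphism is nearly immediate: for arbitrary $g_1,g_2$, both the two-step path $(w_1,g_1,g_2)$ and the one-step path $(w_1,\,g_2g_1)$ lie in the support, so the vanishing identities give $\rho(g_2)\rho(g_1)h(o_1)=h(o_3)=\rho(g_2g_1)h(o_1)$ for every reachable $o_1$, and taking spans yields $\rho(g_2g_1)=\rho(g_2)\rho(g_1)$ on the invariant subspace. No local-to-global step is needed.

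The local-to-global route you attempt instead is precisely the generalization the paper's remarks explicitly defer to future work (``transitions picked from a neighborhood of the identity \dots would require more technical assumptions and proof developments''), and as written it has a genuine flaw at the final identification. Your extension $T(g)=\rho(g_k)\cdots\rho(g_1)\vert_U$ is indeed well-defined (factorisation independence via spanning and chained one-step equivariance is sound), and it is a homomorphism under which $f$ is equivariant. But you then identify $T$ with the learned $\rho$ by invoking that a representation of a connected group is determined by its germ near the identity --- that principle applies to maps already known to be homomorphisms, whereas in your setting the loss constrains $\rho$ only on the sampled neighbourhood: $\rho(g)$ for $g$ outside it is never touched by any term of the objective, can be arbitrary, and need not equal $T(g)$. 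The honest conclusion of your argument is that $\rho$ restricted to the neighbourhood extends uniquely to a representation, not that $\rho$ itself is one; the invocation of the germ principle is circular. Two smaller points: your non-triviality step needs a $g$ that moves some point of the support $S_i$, and faithfulness of the action on all of $W$ does not by itself supply this (the paper is admittedly terse here too); and your own chaining argument derives the two-step identity from two applications of the one-step one, so it does not actually exhibit the advertised ``genuinely new content'' of $\mathcal{L}^2_{pred}$ --- in the paper's setting the two-step loss does its work through the comparison of the two-step path against the one-step path with the composed action.
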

\addtocounter{prop}{-1}
}
The reconstruction loss $\mathcal{L}_{rec}^{k}$, $k\geq 0$ is added to the objective to prevent the representation to collapse into a trivial solution by ensuring $h$ is not constant for a given fixed group representation $\rho^0$ (see Proposition~\ref{prop:inject} in appendix). However, while enforcing the reconstruction of only the initial observation using $\mathcal{L}_{rec}^{0}$ is sufficient in theory, we found empirically that using $\mathcal{L}_{rec}^2$ 
performs better when jointly learning $(\rho,h)$.

\section{Learning a Disentangled Representation}\label{sec:framework}

While we previously elaborated on enforcing a symmetry-based representation, we now discuss how to enforce its disentanglement.

As expressed in Section~\ref{sec:sbdrl}, the disentanglement condition for a linear action on $Z$ with group representation $\rho$ is a decomposition of both the representation space $Z=\bigoplus_1^n Z_i$ and the group representation $\rho = \bigoplus_{i=1}^n \rho_i$. 
Here, the subgroup representations $\rho_i$ are representations of the subgroups $G_i$ on the subspaces $Z_i$. 
Following that the group $G$ is decomposed along the latent world's parametrization, the group representation $\rho$ of the action signals $\varphi(g)$ is disentangled with regard to the group decomposition whenever, 
in matrix form, it is a block-diagonal matrix of the subgroups representations:

\begin{equation}\label{eq:blockrho}
    \rho\big(\varphi(g^1,...,g^n)\big) \!=\! \rho_1(g^1) \oplus \rho_2(g^2)\oplus \dots\oplus \rho_n(g^n)\,.
\end{equation}
However, an agent acting on its environment observes transition $\varphi(g)$ but does not necessarily have access to its ground truth decomposition into components $(g^1,...,g^n)\in G_1 \times ... \times G_n$. Therefore, we investigate inductive biases that can constrain our trainable group representation in the space of matrices of the block diagonal form given in equation \ref{eq:blockrho}. 

\subsection{Strong Constraint to Enforce Disentanglement}\label{sec:dis-grp-repr}
We first assume prior knowledge of (1) the number of groups in the decomposition and (2) the dimension of each subgroup representation $dim(Z_i)$, and shape the output of our neural network $\phi$ as in equation~\ref{eq:learnedrho}, its matrix exponential will have the same shape. 
\begin{equation}\label{eq:learnedrho}
    \!\!\phi(\varphi(g)) \!=\!
    \begin{pmatrix}
    \phi_1(\varphi(g)) &     0       & \hdots    &0      \\
    0         & \phi_2(\varphi(g))  & \ddots    &\vdots \\
    \vdots    &\ddots       & \ddots    &0       \\
    0         &\hdots       & 0         & \phi_n(\varphi(g))\\
    \end{pmatrix}
\end{equation}

While we do not prove that this block diagonal constraint leads to disentanglement (each block is a function of the whole group element $g$ and not just the components $g^i$), we show through experiments that the HAE learns a symmetry-based representation $(\rho,h)$ that is disentangled with regard to the ground truth decomposition of the group $G$ and takes the form of equation~\ref{eq:blockrho}. 

\subsection{Soft Constraint to Enforce Disentanglement}\label{sec:soft-dis}
To drop the assumption of prior knowledge of the number of disentangled subgroups and their dimensions, we take inspiration from work on structured sparsity inducing losses \citep{Bach2011} to add the following regularization term in the loss:
\begin{equation}\label{eq:softblock}
    \mathcal{L}_{sparse}(\rho) \!=\! \sum_t \sum_{i\geq 0} \sqrt{\! \sum_{j\geq i+1,\,k\leq i}\!\!\!\!\!\rho_{kj}(g_t)^2 \!+ \!\rho_{jk}(g_t)^2 }
\end{equation}
This term favors block-diagonal patterns (see example in Figure~\ref{fig:softblock}) by jointly minimizing the group of terms $(\rho_{ij}(g_t))_{ij}$ that violate a given block-diagonal configuration. See Appendix~\ref{app:softblock} for more details. 
The model is then trained on the composite loss
\begin{equation*}
    \mathcal{L} = \mathcal{L}_{rec} + \gamma  \mathcal{L}_{pred} + 
    \delta 
    \mathcal{L}_{sparse}\,.
\end{equation*}
We can then choose a representation space $Z=\RR^D$ with $D$ large enough to accommodate the total dimension of the group representation, extra dimensions being ``trivialized'' into one dimensional subspaces by $\mathcal{L}_{sparse}$.

\section{Experiments}
\subsection{Learning a 2D-Toroidal Latent Structure}\label{sec:exp_torus}
We consider a subset of the dSprites dataset \citep{dsprites17} where a fixed scale and orientation heart is acted on by the group of 2D cyclic translations $G = C_x \times C_y$, resulting in the setup described in Section~\ref{sec:background}, where $G$ is a discrete subgroup of the Lie group $SO(2)\times SO(2)$. The group $G$ is decomposed such that the components of its element $g=(g^x,g^y)$ produce a translation on the observed image along the $x$ and $y$ axis, respectively. The HAE is trained on 2-step paths $(o_1,\varphi(g_1),o_2,\varphi(g_2),o_3)$, where $\varphi$ is a randomly chosen non-linear mapping to a $50$-dimensional ambient space, implemented with a neural network. We use the soft constraints for disentanglement.
Architecture and hyperparameters for training are specified in the appendix~\ref{app:exp-single-shape}.

\begin{figure}[ht]
      \vspace{0pt}
    \centering
     \includegraphics[width=0.7\linewidth]{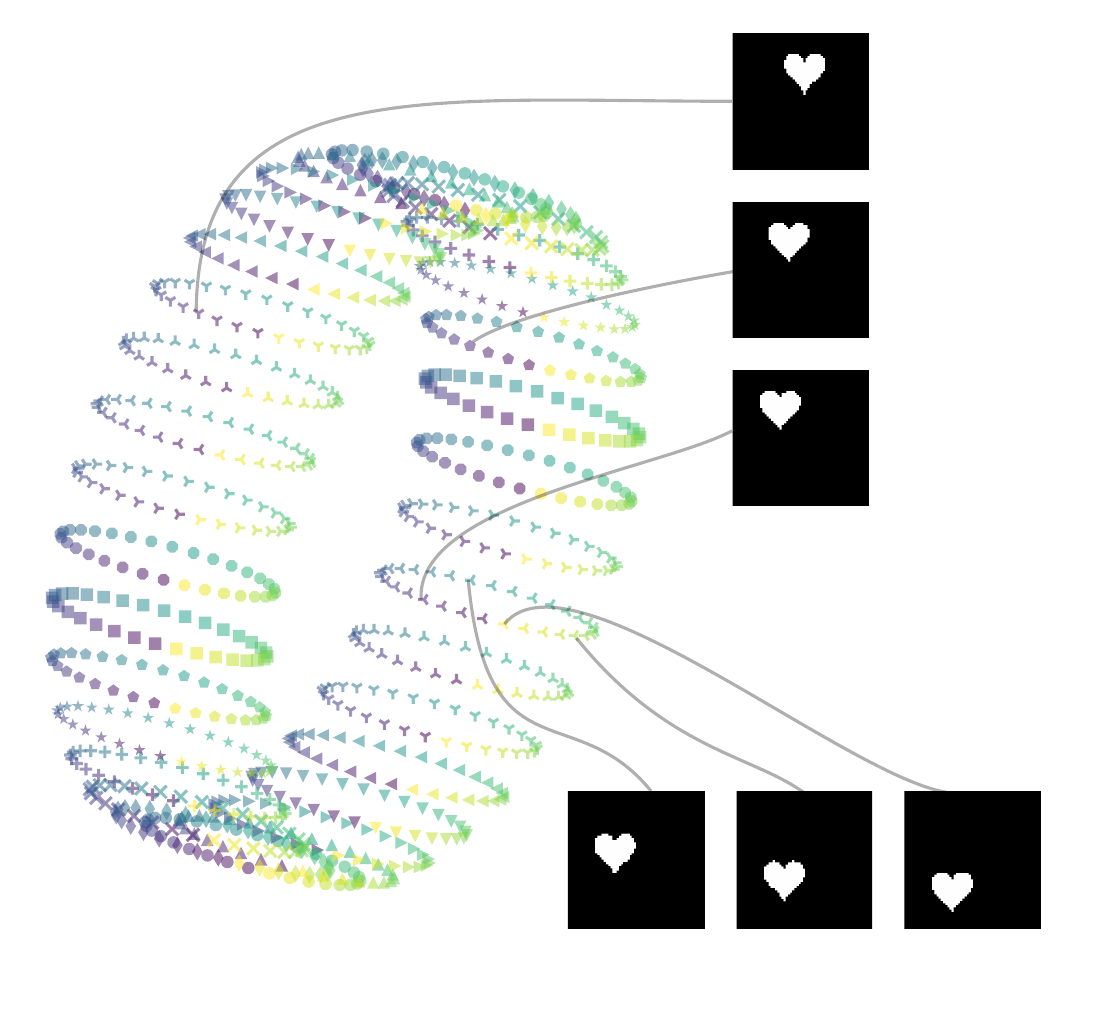}     
     \includegraphics[width=0.6\linewidth]{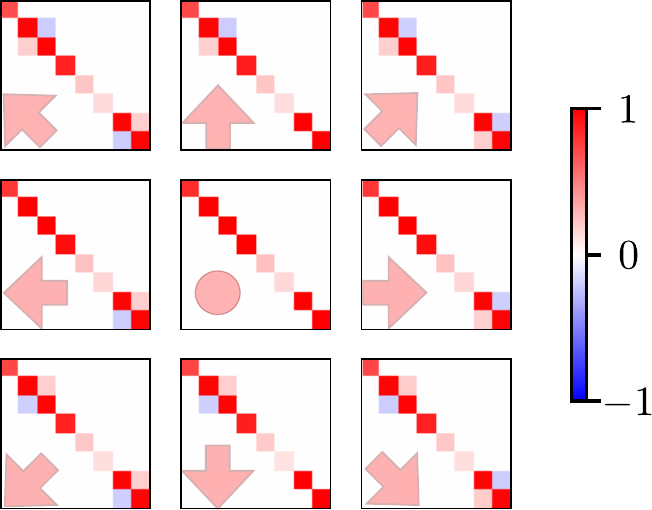}
    \caption{ \textbf{Top:} Projection of the HAEs' $8-$dimensional latent representation vectors $z$ of the translated heart dataset, exhibiting the 2D toroidal structure of the world's state. Color indicates the heart's $x$ position, while markers indicate $y$ position. 
    \textbf{Bottom:} Evaluation of the learned and disentangled group representation $\rho$ for actions over a grid centered on the identity element. Arrows indicate the direction of actions $g$. The representation trivializes the subspace spanned by the indices $1,3,4,5$, $C_x$ and $C_y$ act respectively on dimensions $[2,3]$ and $[7,8]$ through rotation matrices. 
    }
    \label{fig:matrices}    
    \label{fig:manifold}
\end{figure}

\paragraph{Learned data representation.}
We visualize the learned $8-$dimensional encodings of the considered dataset through 2-dimensional random matrix projection (Figure \ref{fig:manifold} top).
The learned manifold corresponds to the expected latent space topology of the 2D torus $S^1\times S^1$ introduced in Section~\ref{sec:background}.

\paragraph{Learned Group Representation $\rho$.}
We then evaluate the learned matrices for the identity $id =(0, 0)$, and a grid of actions around the identity including the generator elements of each subgroup ${1}_x = (\frac{2\pi}{32},0), {1}_y = (0,\frac{2\pi}{32})$ and their inverses $-{1}_x = (-\frac{2\pi}{32},0), -{1}_y = (0,-\frac{2\pi}{32})$. Recall that the components of the group element $g=(g^x,g^y)$ were ``mixed'' using an unknown high-dimensional non-linear mapping $\varphi$  such that axis information of the transformations was not directly accessible. 
The results, Figure~\ref{fig:matrices} (bottom), show that $\rho(id) = \mathbf{I}_8$ as expected from a group representation, and that the representation of generators along each axis acts on a different subspace of the representation. The matrices also correspond to the rotation matrices predicted in Equation~\ref{eq:expectedrho}. A visualization of $\rho$ over a wider neighbourhood of the identity, provided in Figure~\ref{fig:moreactions} of the appendix, shows that composition of transformations are also learned correctly, according to the homomorphism property $\rho(\varphi(gg')) = \rho(\varphi(g))\rho(\varphi(g'))$.

\paragraph{Group representation through the Lie Algebra allows for linear latent traversals.} We describe in Appendix~\ref{app:exp-alg} how our learned group representation $\rho=\exp \circ \, \phi$ gives access to the Lie algebra of the group, which then offers a linear basis $(A_1, A_2)$ to navigate the group and therefore the data manifold (see Figure~\ref{fig:traversal}).

\begin{figure}[ht]
    \centering
    \includegraphics[width=0.45\textwidth]{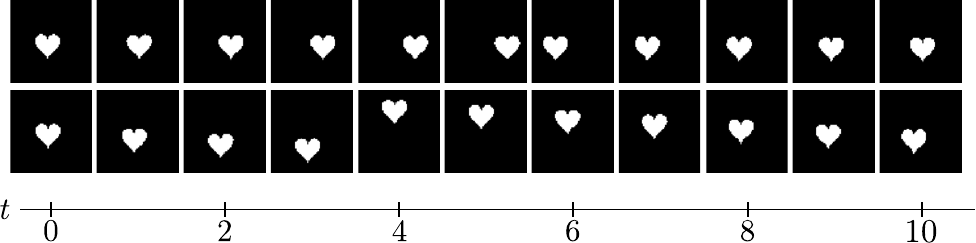}
    \caption{We visualize the linear traversal of the group algebra for the dSprites experiment and its effect on the predicted image reconstruction. The first row corresponds to a traversal $tA_1$ (horizontal displacement), while the second row corresponds to the traversal $tA_2$ (vertical displacement).}
    \label{fig:traversal}
\end{figure}

\paragraph{Learning $SO(2)\times SO(2)\times SO(2)$.} We use the hard constraints for disentanglement presented in section~\ref{sec:dis-grp-repr} to learn a group structured representation for a similar setup where we add the rotations of the heart resulting in a discrete subgroup of $SO(2)\times SO(2)\times SO(2)$. Visualizations and architecture are available in Appendix~\ref{app:add-exp}.

\subsection{Rollout Prediction}

One important application of learning structured representations is to predict how the observations would change given sequences of actions.
We consider longer rollouts $(o_1,g_1,...,g_{N-1},o_N)$ of the dsprites transition dataset the HAE model was trained on.
We compare the HAE to two other approaches of modeling the dynamics in the latent space:
(1) \emph{Unstructured}: $z_{t+1}=h(z_t, a_t)$, where $h$ is a learnable function. 
Similar approaches have been widely adopted in recent model-based deep RL methods \citep{ha2018world, schrittwieser2020mastering}.
(2) \emph{Givens}: $z_{t+1}=R_a z_t$, where $R_a = \prod_{i,j} G(i,j,\theta_{ij})$ and $G(i,j,\theta_{ij})$ are the Givens rotation matrices.
This approach was proposed by \citet{Quessard2020} to parametrize the group representation of the performed actions.

\begin{figure}[ht]
    \vspace{-5pt}
    \centering
    \includegraphics[width=0.42\textwidth]{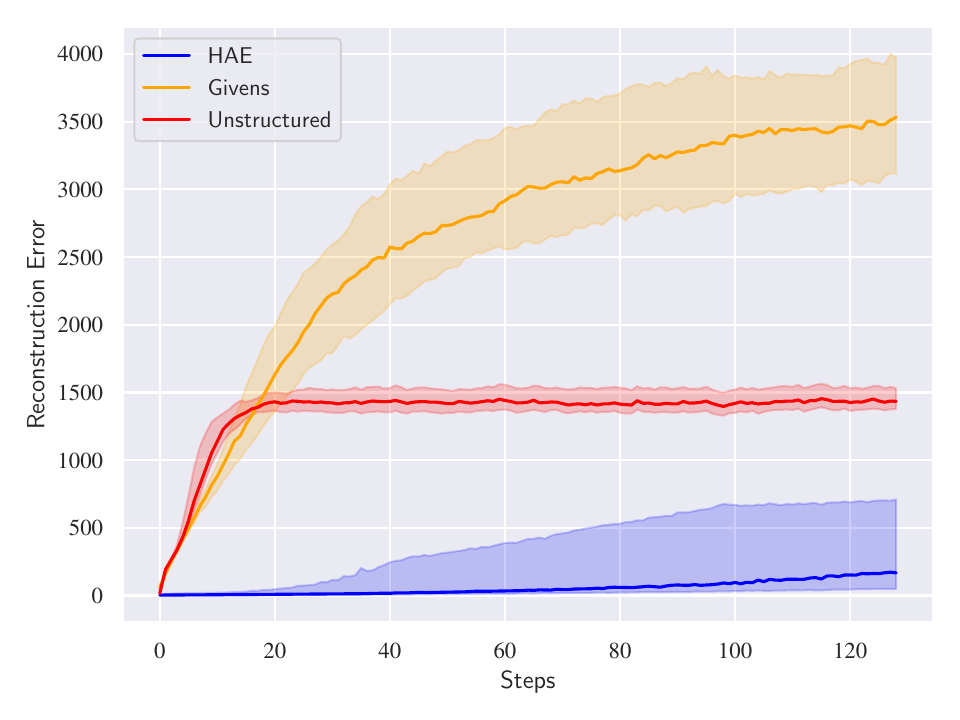}
    \vspace{-10pt}
    \caption{Step-wise reconstruction loss on the test dataset. Lines and shadings represent median and interquartile range over 50 random seeds.}
    \label{fig:recon_loss}
    \vspace{-10pt}
\end{figure}

We evaluate the methods in an offline setting, where we train each method on a given set of 2-step trajectories and test their generalization ability on a hold-out set of 128-step trajectories.
See Appendix~\ref{app:exp-rollout} for details on the setup.
Figure \ref{fig:recon_loss} shows the reconstruction loss for each method on the test trajectories.
Our result suggests that the HAE can outperform other methods significantly.

\subsection{Unsupervised Separation of Identity and Action}\label{sec:sampling}

\begin{figure}[ht]
    \vspace{0pt}
    \centering
\includegraphics[width=0.85\linewidth]{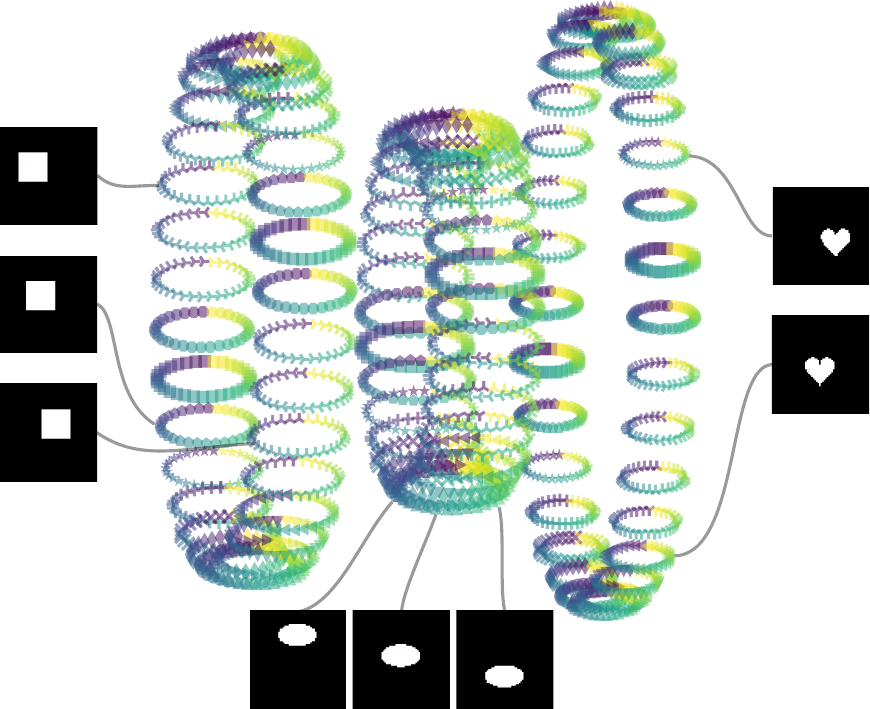}
    \caption{Projection of the $8-$dimensional representation vectors $z$ of the translated dsprites objects. $4$ units of $z$ encode position information, while one direction of the remaining $4$ dimensional space encodes shape identity. Colors indicate $y$ position, markers indicate $x$ position and each torus corresponds to a different shape.}
    \label{fig:muliobject-e}
    
\end{figure}

Next, we consider the subset of the dSprites dataset consisting of three shapes (heart, square and ellipse) acted on by the previous group of 2D cyclic translations $G = C_x \times C_y$. This action is not \textit{transitive} (see Appendix~\ref{app:grp}). 
Indeed, given a world state $w$ corresponding to a given shape at a given location on the torus, the set $G\cdot w$ of all world states we can transition to corresponds to the same shape at all possible locations. 
This is called the \textit{orbit} of $w$ (see Appendix~\ref{app:grp}).
We have 3 orbits, one for each shape.
We do not assume knowledge of any of this and train the HAE using the soft constraint for disentanglement described in Section~\ref{sec:soft-dis} using a representation space of dimension $D~=~8$.

Results show that the model learns a group representation $\rho$ (see Figure~\ref{fig:rho-multishape}) similar to the one learned in the single shape example,

where 4 dimensions correspond to the space acted on non trivially by $\rho$. 
 Figure \ref{fig:muliobject-e} confirms the HAE learns the representation of the cyclic translation group action shared among shapes, but also learns to separate the representations according to object shape along additional latent dimensions, giving rise to three distinct $G$-invariant tori. 
This is reminiscent of the two streams hypothesis of visual processing \citep{Goodale1992}, in which the "What" pathway processes information related to object identity, while the "Where" pathway processes information related to the object pose, relevant for manipulation. 
See Appendix~\ref{app:exp-multiobjects} for a more thorough analysis of this setup.

\subsection{Learning $SO(3)$ Structured Representations.}\label{sec:exp_bunny}
We now investigate whether HAEs can learn representations involving more complex groups than those decomposable into $SO(2)$ subgroups. We consider 2D images of a 3D bunny rotated in 3D space. The group acting on the states of this object is $SO(3)$. It is a 3D manifold with a rather complex topology (notably it is connected, but not simply connected) and contrary to $SO(2)$ it is not commutative. $SO(3)$ is also a $3-$dimensional simple group, it cannot be decomposed as a direct product of non-trivial subgroups. A $3D$ rotation can be described by three angles of consecutive $1D$ rotations around orthogonal axis, a popular choice is the Euler angles in the order Z-Y-X: yaw, pitch  and roll.

We train the HAE on a dataset of $2-$step transitions with small Euler angles $g$ sampled uniformly in the interval $[-0.5,0.5] rad$. The angles are passed through the fixed random neural network $\varphi$ before being forwarded through the group representation $\rho$. We continue to write $g$ for both $g$ and $\varphi(g)$. 
Figure~\ref{fig:bunny_recons} shows the quality of reconstructions on a test $128-$step transition.

\begin{figure}[t]
    \centering
    \includegraphics[width=0.8\linewidth]{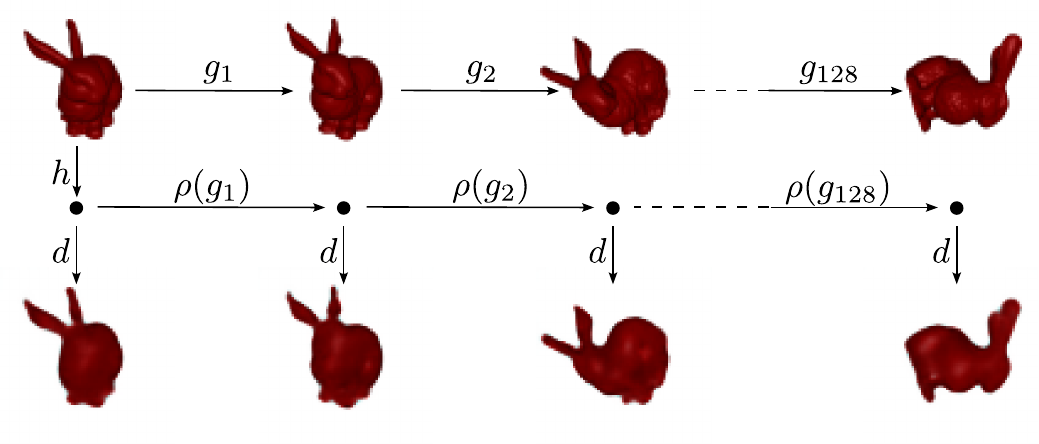}
    \vspace*{-.5cm}
    \caption{Example reconstructions of a 128-step interaction. Top and bottom row respectively correspond to the original and reconstructed observations.
    \label{fig:bunny_recons}}
    \vspace*{-.5cm}
\end{figure}

We choose a representation dimension large enough and let the sparsity loss (See Section~\ref{sec:soft-dis}) trivialize unnecessary dimensions. 
Figure~\ref{fig:rots_fixcol} shows the group representation $\rho(g)$ for specific rotations $g$.  
The group representation emerges block-diagonal due to the sparsity loss despite the group not being decomposable.
The blocks correspond to the irreducible representations of $SO(3)$.
One might expect a single $3$ dimensional block consisting of rotation matrices, which would be enough to act on the position of a point-like particle on the surface of a sphere, however it cannot correspond to a transitive action over $3D$ orientations because the orbits by the action would be spheres (a $2D$ manifold). 
Instead, we obtain two irreducible representations of dimensions $3$ and $5$.
This could be seen as the decomposition of a $9-$dimensional representation of $SO(3)$.
Which makes sense assuming each orientation of the bunny is encoded as a rotation matrix.
As such, the $9-$dimensional representation corresponds to the action of $SO(3)$ on $3\times3$ matrices through $R\cdot M = RMR^{T}$ with $R\in SO(3)$.

One of the benefits of the parametrization $\rho=\exp\circ \phi$ is to leverage the Lie algebra of the group of transitions. Indeed, as the mapping $\rho=exp\circ\phi$ converges to a group representation of $SO(3)$, 
$\phi(g)$ for different transitions $g$ spans the Lie algebra and a basis (group generators) can be obtained by applying PCA on the set of $\{\phi(g)\}_{g\in G}$ inferred from observed samples. 
Figure~\ref{fig:bunny_traversals_fixcols} shows that linear traversals of the Lie algebra correspond to $1$D rotations visualized through their action on the bunny's orientation. It is interesting to note we obtain full-circle rotations through these traversals even though the model was trained solely on transitions with rotations smaller than $30\degree$.

\begin{figure}[ht]
    
    \centering
    \includegraphics[width=\linewidth]{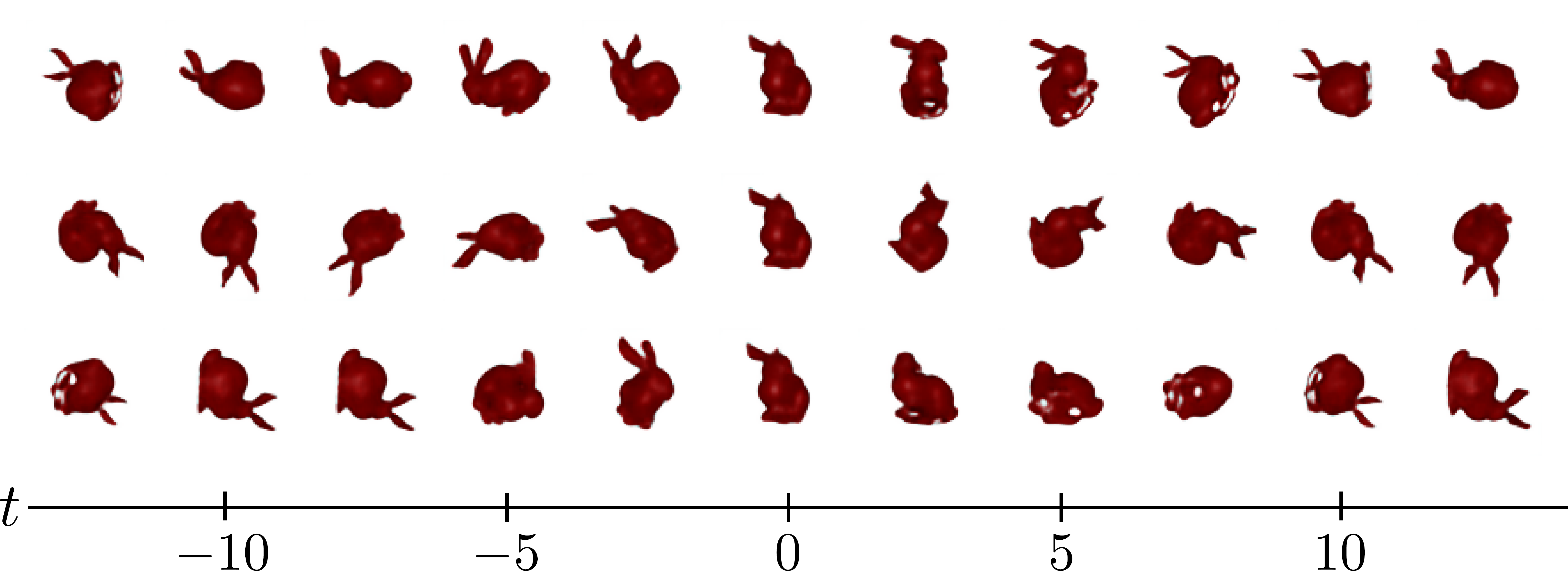}
    \vspace*{-.60cm}
    \caption{Linear traversal of the Lie algebra along the 3 principal components.}
    \label{fig:bunny_traversals_fixcols}
    \vspace*{-.10cm}
\end{figure}

\begin{figure}[ht]
    
    \centering
    \includegraphics[width=0.7\linewidth]{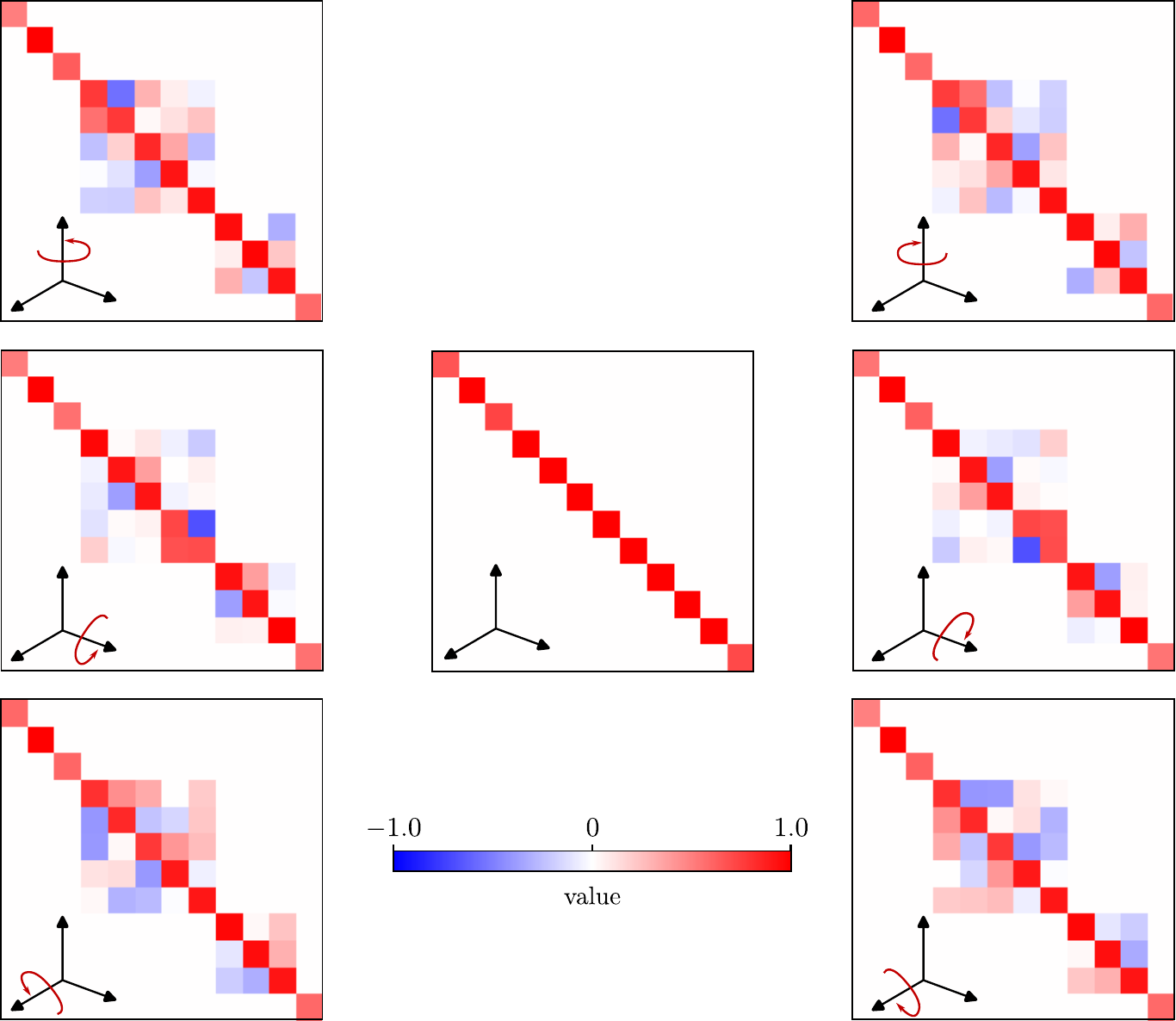}
    \caption{Group representation $\rho(g)$ for example actions $g$ corresponding to identity (center), and rotations $\pi/8$ (left) and $-\pi/8$ (right) around the yaw axis (top), the pitch axis (middle) and the roll axis (bottom).}
    \label{fig:rots_fixcol}
    \vspace*{-.25cm}
\end{figure}

To appreciate how well the group structure is captured, we evaluate the reconstruction error over long 128-step rollouts in Figure~\ref{fig:rollouts_fixcol}. We compare our approach to an unstructured forward model trained on the same dataset. 
Compared to the unstructured model, the HAE keeps a low reconstruction error since at every step the latent remains on the representation manifold. 

When training the HAE on a similar dataset where the action also shifts the bunny's hue, the resulting Lie algebra is decomposed into color and rotation as seen through the traversals in Figure~\ref{fig:bunny_traversals}.

More details are provided in the Appendix~\ref{app:exp-bunny}.

\begin{figure}[ht]
    
    \centering
    \includegraphics[width=0.8\linewidth]{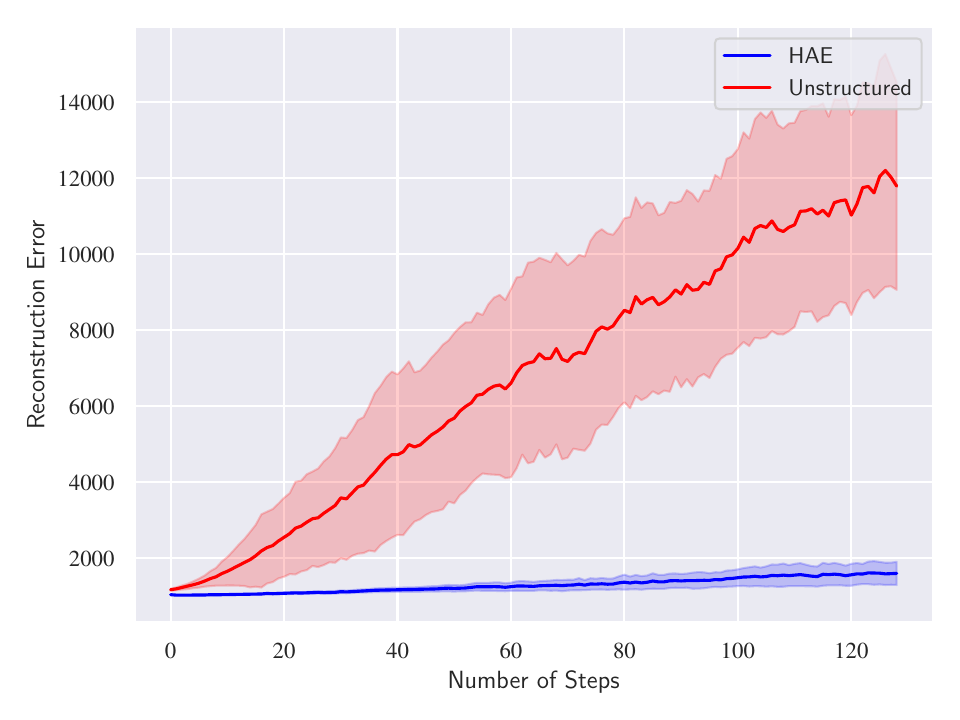}
    \vspace*{-.25cm}
    \caption{Reconstruction error for $128-$step rollouts. Line and shadings correspond respectively to the mean and standard deviation  over $20$ seeds.}
    \label{fig:rollouts_fixcol}
    \vspace*{-.25cm}
\end{figure}

\begin{figure}[ht]
    
    \centering
    \includegraphics[width=0.9\linewidth]{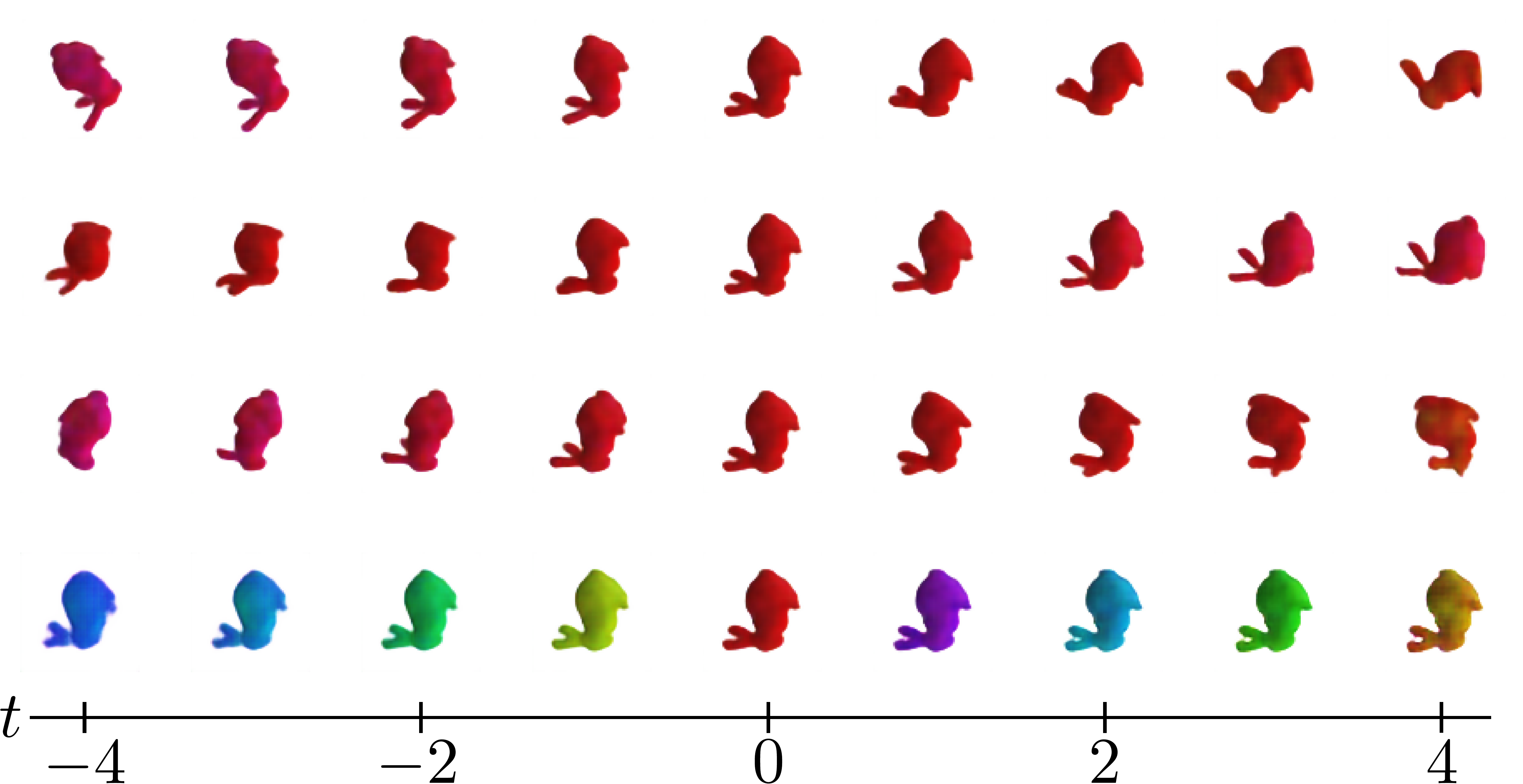}
    \vspace*{-.15cm}
    \caption{Linear traversal of the Lie algebra along the 4 principal components.}
    \label{fig:bunny_traversals}
    \vspace*{-.25cm}
\end{figure}

\vspace*{-.3cm}

\section{Discussion}\label{sec:discussion}
\vspace*{-.2cm}
We provide theoretical and experimental evidence supporting that the HAE allows an agent to
infer the geometric structure of its environment by learning a corresponding group-structured low-dimensional internal manifold. In contrast to earlier works, 

our assumptions are relaxed to general properties of the structuring group, involving in particular compactness and connectedness. 
Unlike other methods, we do not constrain observed actions to be performed along the generative factors. One limitation of our framework (as well as of previous works) 
is the deterministic nature of the mapping between state and environment. The intrinsically non-linear nature of the problem makes the theoretical analysis more challenging in the stochastic setting and is left to future work.

Our theoretical result emphasizes the benefits of a 2-step prediction objective to learn the group structure of the latent representation. In addition, the requirement of an additional loss on the reconstruction of the observations resonates with the central role played by predictive coding frameworks in neuroscientific accounts of representation learning \citep{rao1999predictive,pezzulo2022evolution}. In this sense, our result suggests an interplay between different learning objectives that, when achieved jointly, may endow human and artificial agents with interventional world representations.

When using a set of cyclic actions, we find 

that the HAE learns a toroidal manifold similar to the embeddings reported in neuroscience, for example, in the hippocampus of mice where toroid-like embeddings
encode the animals' head-direction \citep{Chaudhuri2019}.
When considering an environment consisting of different objects acted on by the same group of transformations, we find that the HAE learns the shared geometric structure and separates the objects into different orbits, resulting in a quotient space of transformation invariant representations of the objects. 
In particular, the emergence of these invariant object representations suggests that the enforced group structure allows the unsupervised learning of additional behavior-relevant information encoded in the latent representation. 

We further provided experimental support for the ability of HAEs to learn the action of more complex groups such as $SO(3)$ and showed that HAEs's Lie algebra structure of the learned group representation can be explored using PCA. 

In {\bf conclusion}, we have shown theoretically and experimentally that the proposed HAE approach constitutes a step towards moving from statistical representations to latent interventional representations, learned in a self-supervised manner without reinforcement signal. Subject to relatively limited assumptions, these representations successfully capture the environment's ground-truth interventional group structure.

\subsection*{Acknowledgements}
H.K. was supported by CLS. 
H-R.P. was supported by the International Max Planck Research School for Intelligent Systems. 
This work was supported by the Swiss National Science Foundation (B.F.G. CRSII5-173721 and
315230\_189251), ETH funding (B.F.G.\ ETH-20 19-01), the Human Frontiers Science Program
(RGY0072/2019) and the German Federal Ministry of Education and Research (BMBF): Tübingen AI Center, FKZ: 1IS18039B.

\clearpage
\bibliography{bibliography}

\begin{thebibliography}{47}
\providecommand{\natexlab}[1]{#1}
\providecommand{\url}[1]{\texttt{#1}}
\expandafter\ifx\csname urlstyle\endcsname\relax
  \providecommand{\doi}[1]{doi: #1}\else
  \providecommand{\doi}{doi: \begingroup \urlstyle{rm}\Url}\fi

\bibitem[Ansel et~al.(2024)Ansel, Yang, He, Gimelshein, Jain, Voznesensky, Bao,
  Bell, Berard, Burovski, Chauhan, Chourdia, Constable, Desmaison, DeVito,
  Ellison, Feng, Gong, Gschwind, Hirsh, Huang, Kalambarkar, Kirsch, Lazos,
  Lezcano, Liang, Liang, Lu, Luk, Maher, Pan, Puhrsch, Reso, Saroufim,
  Siraichi, Suk, Suo, Tillet, Wang, Wang, Wen, Zhang, Zhao, Zhou, Zou, Mathews,
  Chanan, Wu, and Chintala]{Ansel_PyTorch_2_Faster_2024}
Ansel, J., Yang, E., He, H., Gimelshein, N., Jain, A., Voznesensky, M., Bao,
  B., Bell, P., Berard, D., Burovski, E., Chauhan, G., Chourdia, A., Constable,
  W., Desmaison, A., DeVito, Z., Ellison, E., Feng, W., Gong, J., Gschwind, M.,
  Hirsh, B., Huang, S., Kalambarkar, K., Kirsch, L., Lazos, M., Lezcano, M.,
  Liang, Y., Liang, J., Lu, Y., Luk, C., Maher, B., Pan, Y., Puhrsch, C., Reso,
  M., Saroufim, M., Siraichi, M.~Y., Suk, H., Suo, M., Tillet, P., Wang, E.,
  Wang, X., Wen, W., Zhang, S., Zhao, X., Zhou, K., Zou, R., Mathews, A.,
  Chanan, G., Wu, P., and Chintala, S.
\newblock {PyTorch 2: Faster Machine Learning Through Dynamic Python Bytecode
  Transformation and Graph Compilation}.
\newblock In \emph{29th ACM International Conference on Architectural Support
  for Programming Languages and Operating Systems, Volume 2 (ASPLOS '24)}. ACM,
  April 2024.
\newblock \doi{10.1145/3620665.3640366}.
\newblock URL \url{https://pytorch.org/assets/pytorch2-2.pdf}.

\bibitem[Antonyan et~al.(2009)Antonyan, Antonyan, and
  Rodr{\'\i}guez-Medina]{antonyan2009linearization}
Antonyan, N., Antonyan, S.~A., and Rodr{\'\i}guez-Medina, L.
\newblock Linearization of proper group actions.
\newblock \emph{Topology and its Applications}, 156\penalty0 (11):\penalty0
  1946--1956, 2009.

\bibitem[Bach et~al.(2011)Bach, Jenatton, Mairal, and Obozinski]{Bach2011}
Bach, F., Jenatton, R., Mairal, J., and Obozinski, G.
\newblock Optimization with sparsity-inducing penalties.
\newblock \emph{Foundations and Trends in Machine Learning}, 4:\penalty0 --,
  2011.
\newblock ISSN 19358237.
\newblock \doi{10.1561/2200000015}.

\bibitem[Bader et~al.(2019)Bader, Blanes, and Casas]{Bader2019}
Bader, P., Blanes, S., and Casas, F.
\newblock Computing the matrix exponential with an optimized taylor polynomial
  approximation.
\newblock \emph{Mathematics 2019, Vol. 7, Page 1174}, 7:\penalty0 1174, 12
  2019.
\newblock ISSN 2227-7390.
\newblock \doi{10.3390/MATH7121174}.

\bibitem[Bengio et~al.(2013)Bengio, Courville, and Vincent]{Bengio2012}
Bengio, Y., Courville, A., and Vincent, P.
\newblock Representation learning: A review and new perspectives.
\newblock \emph{IEEE Trans. Pattern Anal. Mach. Intell.}, 35\penalty0
  (8):\penalty0 1798–1828, aug 2013.
\newblock ISSN 0162-8828.
\newblock \doi{10.1109/TPAMI.2013.50}.
\newblock URL \url{https://doi.org/10.1109/TPAMI.2013.50}.

\bibitem[Besserve et~al.(2018)Besserve, Shajarisales, Sch{\"o}lkopf, and
  Janzing]{besserve2018aistats}
Besserve, M., Shajarisales, N., Sch{\"o}lkopf, B., and Janzing, D.
\newblock Group invariance principles for causal generative models.
\newblock In \emph{AISTATS}, 2018.

\bibitem[Besserve et~al.(2021)Besserve, Sun, Janzing, and
  Sch\"olkopf]{Besserve_Sun_Janzing_Scholkopf_2021}
Besserve, M., Sun, R., Janzing, D., and Sch\"olkopf, B.
\newblock A theory of independent mechanisms for extrapolation in generative
  models.
\newblock In \emph{Proceedings of the AAAI Conference on Artificial
  Intelligence}, pp.\  6741--6749, 2021.

\bibitem[Caselles-Dupr{\'{e}} et~al.(2019)Caselles-Dupr{\'{e}}, Garcia-Ortiz,
  and Filliat]{Caselles-Dupre2019}
Caselles-Dupr{\'{e}}, H., Garcia-Ortiz, M., and Filliat, D.
\newblock {Symmetry-based disentangled representation learning requires
  interaction with environments}.
\newblock In \emph{Advances in Neural Information Processing Systems},
  volume~32, 2019.

\bibitem[Chaudhuri et~al.(2019)Chaudhuri, Gerçek, Pandey, Peyrache, and
  Fiete]{Chaudhuri2019}
Chaudhuri, R., Gerçek, B., Pandey, B., Peyrache, A., and Fiete, I.
\newblock The intrinsic attractor manifold and population dynamics of a
  canonical cognitive circuit across waking and sleep.
\newblock \emph{Nature Neuroscience}, 22, 2019.

\bibitem[Cohen \& Welling(2016)Cohen and Welling]{CohenW16a}
Cohen, T.~S. and Welling, M.
\newblock {Steerable CNNs}.
\newblock \emph{CoRR}, abs/1612.0, 2016.

\bibitem[Connor \& Rozell(2020)Connor and Rozell]{Connor2020}
Connor, M. and Rozell, C.
\newblock Representing closed transformation paths in encoded network latent
  space.
\newblock \emph{Proceedings of the AAAI Conference on Artificial Intelligence},
  34\penalty0 (04):\penalty0 3666--3675, Apr. 2020.
\newblock \doi{10.1609/aaai.v34i04.5775}.
\newblock URL \url{https://ojs.aaai.org/index.php/AAAI/article/view/5775}.

\bibitem[Dehmamy et~al.(2021)Dehmamy, Walters, Liu, Wang, and Yu]{Dehmamy2021}
Dehmamy, N., Walters, R., Liu, Y., Wang, D., and Yu, R.
\newblock Automatic symmetry discovery with lie algebra convolutional network.
\newblock In \emph{Advances in Neural Information Processing Systems},
  volume~4, pp.\  2503--2515, 9 2021.
\newblock ISBN 9781713845393.

\bibitem[Dodwell(1983)]{Dodwell1983}
Dodwell, P.~C.
\newblock The {Lie} transformation group model of visual perception.
\newblock \emph{Perception \& Psychophysics}, 34, 1983.
\newblock ISSN 00315117.
\newblock \doi{10.3758/BF03205890}.

\bibitem[Eslami et~al.(2018)Eslami, Rezende, Besse, Viola, Morcos, Garnelo,
  Ruderman, Rusu, Danihelka, Gregor, Reichert, Buesing, Weber, Vinyals,
  Rosenbaum, Rabinowitz, King, Hillier, Botvinick, Wierstra, Kavukcuoglu, and
  Hassabis]{AliEslami2018}
Eslami, S. M.~A., Rezende, D.~J., Besse, F., Viola, F., Morcos, A.~S., Garnelo,
  M., Ruderman, A., Rusu, A.~A., Danihelka, I., Gregor, K., Reichert, D.~P.,
  Buesing, L., Weber, T., Vinyals, O., Rosenbaum, D., Rabinowitz, N., King, H.,
  Hillier, C., Botvinick, M., Wierstra, D., Kavukcuoglu, K., and Hassabis, D.
\newblock Neural scene representation and rendering.
\newblock \emph{Science}, 360:\penalty0 1204--1210, 2018.
\newblock ISSN 10959203.
\newblock \doi{10.1126/science.aar6170}.

\bibitem[Finzi et~al.(2021)Finzi, Welling, and Wilson]{Finzi2021}
Finzi, M., Welling, M., and Wilson, A.~G.
\newblock A practical method for constructing equivariant multilayer
  perceptrons for arbitrary matrix groups.
\newblock \emph{38th International Conference on Machine Learning}, 2021.

\bibitem[Goodale \& Milner(1992)Goodale and Milner]{Goodale1992}
Goodale, M.~A. and Milner, A.~D.
\newblock Separate visual pathways for perception and action.
\newblock \emph{Trends in Neurosciences}, 15, 1992.
\newblock ISSN 01662236.
\newblock \doi{10.1016/0166-2236(92)90344-8}.

\bibitem[Gresele et~al.(2021)Gresele, von Kügelgen, Stimper, Schölkopf, and
  Besserve]{gresele_independent_2021}
Gresele, L., von Kügelgen, J., Stimper, V., Schölkopf, B., and Besserve, M.
\newblock Independent mechanism analysis, a new concept?
\newblock In \emph{Advances in Neural Information Processing Systems}, 2021.

\bibitem[Ha \& Schmidhuber(2018)Ha and Schmidhuber]{ha2018world}
Ha, D. and Schmidhuber, J.
\newblock World models.
\newblock \emph{arXiv preprint arXiv:1803.10122}, 2018.

\bibitem[Hall(2015)]{hall2015}
Hall, B.
\newblock \emph{Lie groups, Lie algebras, and representations}.
\newblock Springer, 2015.

\bibitem[Higgins et~al.(2018)Higgins, Amos, Pfau, Racaniere, Matthey, Rezende,
  and Lerchner]{Higgins2018}
Higgins, I., Amos, D., Pfau, D., Racaniere, S., Matthey, L., Rezende, D., and
  Lerchner, A.
\newblock Towards a definition of disentangled representations.
\newblock \emph{arXiv preprint arXiv:1812.02230}, 2018.

\bibitem[Hyvarinen \& Morioka(2016)Hyvarinen and
  Morioka]{hyvarinen_unsupervised_2016}
Hyvarinen, A. and Morioka, H.
\newblock Unsupervised feature extraction by time-contrastive learning and
  nonlinear {ICA}.
\newblock \emph{Advances in neural information processing systems}, 29, 2016.

\bibitem[Keller et~al.(2012)Keller, Bonhoeffer, and Hübener]{Keller2012}
Keller, G., Bonhoeffer, T., and Hübener, M.
\newblock Sensorimotor mismatch signals in primary visual cortex of the
  behaving mouse.
\newblock \emph{Neuron}, 74, 2012.

\bibitem[Khemakhem et~al.(2020)Khemakhem, Kingma, Monti, and
  Hyvarinen]{khemakhem_variational_2020}
Khemakhem, I., Kingma, D., Monti, R., and Hyvarinen, A.
\newblock Variational {Autoencoders} and {Nonlinear} {ICA}: {A} {Unifying}
  {Framework}.
\newblock In \emph{International {Conference} on {Artificial} {Intelligence}
  and {Statistics}}, pp.\  2207--2217. PMLR, June 2020.
\newblock ISSN: 2640-3498.

\bibitem[Kondor \& Trivedi(2018)Kondor and Trivedi]{Kondor2018}
Kondor, R. and Trivedi, S.
\newblock On the generalization of equivariance and convolution in neural
  networks to the action of compact groups.
\newblock \emph{35th International Conference on Machine Learning}, 6:\penalty0
  4295--4309, 2 2018.

\bibitem[Kraft \& Russell(2014)Kraft and Russell]{kraft2014families}
Kraft, H. and Russell, P.
\newblock Families of group actions, generic isotriviality, and linearization.
\newblock \emph{Transformation groups}, 19\penalty0 (3):\penalty0 779--792,
  2014.

\bibitem[Kulkarni et~al.(2015)Kulkarni, Whitney, Kohli, and
  Tenenbaum]{kulkarni2015deep}
Kulkarni, T.~D., Whitney, W.~F., Kohli, P., and Tenenbaum, J.
\newblock Deep convolutional inverse graphics network.
\newblock \emph{Advances in neural information processing systems}, 28, 2015.

\bibitem[LeCun(1989)]{LeCun1989}
LeCun, Y.
\newblock Generalization and network design strategies.
\newblock In Pfeifer, R., Schreter, Z., Fogelman, F., and Steels, L. (eds.),
  \emph{Connectionism in Perspective}, Zurich, Switzerland, 1989. Elsevier.
\newblock an extended version was published as a technical report of the
  University of Toronto.

\bibitem[Lee(2013)]{lee2013smooth}
Lee, J.~M.
\newblock Smooth manifolds.
\newblock In \emph{Introduction to Smooth Manifolds}, pp.\  1--31. Springer,
  2013.

\bibitem[Locatello et~al.(2019)Locatello, Bauer, Lucie, Rätsch, Gelly,
  Schölkopf, and Bachem]{Locatello2019}
Locatello, F., Bauer, S., Lucie, M., Rätsch, G., Gelly, S., Schölkopf, B.,
  and Bachem, O.
\newblock Challenging common assumptions in the unsupervised learning of
  disentangled representations.
\newblock \emph{36th International Conference on Machine Learning, ICML 2019},
  2019-June:\penalty0 7247--7283, 2019.

\bibitem[Matthey et~al.(2017)Matthey, Higgins, Hassabis, and
  Lerchner]{dsprites17}
Matthey, L., Higgins, I., Hassabis, D., and Lerchner, A.
\newblock dsprites: Disentanglement testing sprites dataset.
\newblock https://github.com/deepmind/dsprites-dataset/, 2017.

\bibitem[Painter et~al.(2020)Painter, Hare, and Prügel-Bennett]{Painter20}
Painter, M., Hare, J., and Prügel-Bennett, A.
\newblock Linear disentangled representations and unsupervised action
  estimation.
\newblock \emph{Advances in Neural Information Processing Systems}, 33, 2020.

\bibitem[Park et~al.(2022)Park, Biza, Zhao, Van De~Meent, and
  Walters]{park2022learning}
Park, J.~Y., Biza, O., Zhao, L., Van De~Meent, J.-W., and Walters, R.
\newblock Learning symmetric embeddings for equivariant world models.
\newblock In \emph{International Conference on Machine Learning}, pp.\
  17372--17389. PMLR, 2022.

\bibitem[Pezzulo et~al.(2022)Pezzulo, Parr, and Friston]{pezzulo2022evolution}
Pezzulo, G., Parr, T., and Friston, K.
\newblock The evolution of brain architectures for predictive coding and active
  inference.
\newblock \emph{Philosophical Transactions of the Royal Society B},
  377\penalty0 (1844):\penalty0 20200531, 2022.

\bibitem[Piaget(1964)]{Piaget1964a}
Piaget, J.
\newblock {Part I: Cognitive development in children: Piaget development and
  learning}.
\newblock \emph{Journal of Research in Science Teaching}, 2\penalty0
  (3):\penalty0 176--186, 1964.
\newblock ISSN 10982736.
\newblock \doi{10.1002/tea.3660020306}.

\bibitem[Quessard et~al.(2020)Quessard, Barrett, and Clements]{Quessard2020}
Quessard, R., Barrett, T.~D., and Clements, W.~R.
\newblock {Learning disentangled representations and group structure of
  dynamical environments}.
\newblock In \emph{Advances in Neural Information Processing Systems}, 2020.

\bibitem[Rao \& Ballard(1999)Rao and Ballard]{rao1999predictive}
Rao, R.~P. and Ballard, D.~H.
\newblock Predictive coding in the visual cortex: a functional interpretation
  of some extra-classical receptive-field effects.
\newblock \emph{Nature neuroscience}, 2\penalty0 (1):\penalty0 79--87, 1999.

\bibitem[Sch{\"o}lkopf et~al.(2021)Sch{\"o}lkopf, Locatello, Bauer, Ke,
  Kalchbrenner, Goyal, and Bengio]{Scholkopfetal21}
Sch{\"o}lkopf, B., Locatello, F., Bauer, S., Ke, N.~R., Kalchbrenner, N.,
  Goyal, A., and Bengio, Y.
\newblock Toward causal representation learning.
\newblock \emph{Proceedings of the IEEE}, 109\penalty0 (5):\penalty0 612--634,
  2021.

\bibitem[Schrittwieser et~al.(2020)Schrittwieser, Antonoglou, Hubert, Simonyan,
  Sifre, Schmitt, Guez, Lockhart, Hassabis, Graepel,
  et~al.]{schrittwieser2020mastering}
Schrittwieser, J., Antonoglou, I., Hubert, T., Simonyan, K., Sifre, L.,
  Schmitt, S., Guez, A., Lockhart, E., Hassabis, D., Graepel, T., et~al.
\newblock Mastering atari, go, chess and shogi by planning with a learned
  model.
\newblock \emph{Nature}, 588\penalty0 (7839):\penalty0 604--609, 2020.

\bibitem[Sontakke et~al.(2021)Sontakke, Mehrjou, Itti, and
  Sch{\"o}lkopf]{sontakke2021causal}
Sontakke, S.~A., Mehrjou, A., Itti, L., and Sch{\"o}lkopf, B.
\newblock Causal curiosity: {RL} agents discovering self-supervised experiments
  for causal representation learning.
\newblock In \emph{International Conference on Machine Learning}, pp.\
  9848--9858. PMLR, 2021.

\bibitem[Sutton \& Barto(2015)Sutton and Barto]{Sutton1998}
Sutton, R.~S. and Barto, A.~G.
\newblock \emph{Reinforcement Learning: An Introduction}.
\newblock MIT Press, Cambridge, MA, 2nd edition, 2015.

\bibitem[Thomas et~al.(2017)Thomas, Pondard, Bengio, Sarfati, Beaudoin, Meurs,
  Pineau, Precup, and Bengio]{Thomas2017}
Thomas, V., Pondard, J., Bengio, E., Sarfati, M., Beaudoin, P., Meurs, M.-J.,
  Pineau, J., Precup, D., and Bengio, Y.
\newblock Independently controllable factors, 2017.
\newblock URL \url{https://arxiv.org/abs/1708.01289}.

\bibitem[Tonnaer et~al.(2022)Tonnaer, Rey, Menkovski, Holenderski, and
  Portegies]{Tonnaer2020}
Tonnaer, L., Rey, L. A.~P., Menkovski, V., Holenderski, M., and Portegies, J.
\newblock Quantifying and learning linear symmetry-based disentanglement.
\newblock In \emph{International Conference on Machine Learning}, pp.\
  21584--21608. PMLR, 2022.

\bibitem[Turk \& Levoy(1994)Turk and Levoy]{turk1994zippered}
Turk, G. and Levoy, M.
\newblock Zippered polygon meshes from range images.
\newblock In \emph{Proceedings of the 21st annual conference on Computer
  graphics and interactive techniques}, pp.\  311--318, 1994.

\bibitem[van~der Pol et~al.(2020)van~der Pol, Worrall, van Hoof, Oliehoek, and
  Welling]{vanderpol20b}
van~der Pol, E., Worrall, D., van Hoof, H., Oliehoek, F., and Welling, M.
\newblock Mdp homomorphic networks: Group symmetries in reinforcement learning.
\newblock In Larochelle, H., Ranzato, M., Hadsell, R., Balcan, M., and Lin, H.
  (eds.), \emph{Advances in Neural Information Processing Systems}, volume~33,
  pp.\  4199--4210. Curran Associates, Inc., 2020.
\newblock URL
  \url{https://proceedings.neurips.cc/paper_files/paper/2020/file/2be5f9c2e3620eb73c2972d7552b6cb5-Paper.pdf}.

\bibitem[von Kügelgen et~al.(2021)von Kügelgen, Sharma, Gresele, Brendel,
  Schölkopf, Besserve, and Locatello]{von_kugelgen_self-supervised_2021}
von Kügelgen, J., Sharma, Y., Gresele, L., Brendel, W., Schölkopf, B.,
  Besserve, M., and Locatello, F.
\newblock Self-{Supervised} {Learning} with {Data} {Augmentations} {Provably}
  {Isolates} {Content} from {Style}.
\newblock In \emph{Advances in Neural Information Processing Systems}, 2021.

\bibitem[Yang et~al.(2021)Yang, Ren, Wang, Zeng, and Zheng]{Tao2022}
Yang, T., Ren, X., Wang, Y., Zeng, W., and Zheng, N.
\newblock Towards building a group-based unsupervised representation
  disentanglement framework, 2021.

\bibitem[Zhou et~al.(2021)Zhou, Knowles, and Finn]{Zhou2020}
Zhou, A., Knowles, T., and Finn, C.
\newblock Meta-learning symmetries by reparameterization.
\newblock In \emph{International Conference on Learning Representations}, 2021.

\end{thebibliography}
\bibliographystyle{icml2023}

\newpage
\appendix

\onecolumn

\section{Background on group theory}\label{app:grp}

In this section, we provide an overview of group theory concepts exploited in this work.

\begin{definition}[Group]\label{def:grp}
A set $G$ is a group if it is equipped with a binary operation $\cdot:G\times G\rightarrow G$ and if the group axioms are satisfied
\begin{enumerate}
    \item Associativity: $\forall a,b,c\in G$, $(a\cdot b)\cdot c = a\cdot(b\cdot c)$
    \item Identity: There exists $e\in G$ such that $\forall a \in G$, $a\cdot e = e \cdot a = a$.
    \item Inverse: $\forall a\in G$, there exists $b\in G$ such that $a\cdot b = b\cdot a = e$. This inverse is denoted $a^{-1}$.
\end{enumerate}
\end{definition}
We are often interested in sets of transformations, which respect a group structure, but are applied to objects that are not necessarily group elements. 
This can be studied through group actions, which describe how groups \emph{act} on other mathematical entities. 
\begin{definition}[Group Action]\label{def:grp_act}
Given a group $G$ and a set $X$, a group action is a function $\cdot_X:G\times~X~\rightarrow~X$ such that the following conditions are satisfied.
\begin{enumerate}
    \item Identity: If $e\in G$ is the identity element, then $e\cdot_X x = x$, $\forall x \in X$.
    \item Compatibility: $\forall g,h \in G$ and $\forall x \in X, \quad g\cdot_X (h\cdot_X x) = (g\cdot h)\cdot_X x$
\end{enumerate}
\end{definition}

The group action $\cdot_X: G\times~X~\rightarrow~X$ induces a group homomorphism $\rho_{\cdot_X}:G\rightarrow~Sym(X)$.
(where $Sym(X)$ is the group of all invertible transformations of $X$)
through:
\[
\forall (g,x)\in G\times X,\quad \rho_{\cdot_X}(g)(x):= g \cdot_X x 
\]
The group homomorphism property of $\rho_{\cdot_X}$ comes from the group action axioms of $\cdot_X$:
\[
\rho_{\cdot_X}(id)(x) = id \cdot_X x = x\quad \textrm{(identity)}
\]
\[
= id_X(x)
\]
So $\rho_{\cdot_X}(id) = id_X$.
and
\[
\rho_{\cdot_X}(g_1\cdot g_2)(x) = (g_1\cdot g_2) \cdot_X x = g_1 \cdot_X (g_2 \cdot_X x) \quad \textrm{(compatibility)}
\]
\[
=\rho_{\cdot_X}(g_1) \circ \rho_{\cdot_X}(g_2) (x)
\]
Equality over all of $X$ leads to equality of the functions: $\rho_{\cdot_X}(g_1\cdot~g_2)~=\rho_{\cdot_X}(g_1)\circ~\rho_{\cdot_X}(g_2)$. 

In what follows, we are interested in \emph{linear group actions} in which case the acted on space is a vector space $V$ and the induced homomorphism $\rho$ maps $G$ to the group $GL(V)$ of invertible linear transformations of $V$.
This mapping is called a group representation. 
Actions of this type have been studied extensively in representation theory.

\begin{definition}[Group Representation]\label{def:grp_rep}
Let $G$ be a group and $V$ a vector space. A representation is a function $\rho:G\rightarrow GL(V)$ such that $\forall g, h\in G$, one has $\rho(g)\rho(h) = \rho(g\cdot h)$.
\end{definition}
Note that such definition is not restricted to finite dimensional vector spaces, however we will limit our study to this case, such that representations are appropriately described by mappings from $G$ to a space of square matrices. 
 
\begin{definition}[Lie Group]
A Lie Group $G$ is a nonempty set satisfying the following conditions:
\begin{itemize}
    \item $G$ is a group.
    \item $G$ is a smooth manifold.
    \item The group operation $\cdot:G\times G \rightarrow G$ and the inverse map $.^{-1}:G\rightarrow G$ are smooth.
\end{itemize}
\end{definition}

We limit ourselves to the study of linear Lie Groups, Lie groups that are matrix groups.
The tangent space to a Lie Group at the identity forms a Lie Algebra. A Lie Algebra $\mathfrak{g}$ is a vector space equipped with a bilinear map $[.,.]:\mathfrak{g}\times \mathfrak{g} \rightarrow \mathfrak{g}$ called the Lie Bracket.
We will not introduce the Lie Bracket as we do not make use of it.
The Lie Algebra somehow describes most of everything happening in its Lie Group. This connection is established through the exponential map.
\begin{definition}[Exponential Map]
The exponential map $exp: \mathfrak{g} \rightarrow G$ is defined for matrix Lie Groups by the series:
\[
e^A = \sum_{k=0}^\infty \frac{1}{k} A^k.\quad \forall A \in \mathfrak{g}
\]
\end{definition}
The exponential map is not always surjective. However if we only consider groups that are connected and compact, the exponential is surjective, which justifies our parametrization of the group representation through:
\[
    \rho: G \xrightarrow{\phi} \mathfrak{g}=M_n(\mathbb{R}) \xrightarrow{exp} GL_n(\mathbb{R})
\]
Where $\phi$ is a trainable arbitrary mapping.

\paragraph{Lie Groups, Algebras and Representations}\label{app:rhophi}
When a group $G$ is also a smooth manifold it is called a Lie Group.
The tangent space to the group $G$ at the identity forms a Lie Algebra $\mathfrak{g}$: a vector space equipped with a bilinear ``Lie bracket'' operator $[\,.\,,\,.\,]$.
 Importantly, when the Lie group has a representation $\rho:G\to GL(V)$, %\bernhard{this looks wrong -- if it's a representation it should go to a space of linear maps of $V$, no?} 
the Lie algebra can also be associated to a (matrix) representation $R:\mathfrak{g}\to \mathfrak{gl}(V)$, which is the differential of $\rho$ and where  $\mathfrak{gl}(V)$ denotes the Lie algebra consisting of all linear endomorphisms of $V$, with bracket given by $[ X , Y ] = X Y - Y X$. 
%\michel{add justification to introduce these concepts?}
Restricting ourselves to matrix Lie groups, we will leverage a convenient link between these two representations through the \textit{matrix exponential} given by the series $\exp(A) = \sum_{k=0}^\infty \frac{1}{k!} A^k$ for arbitrary square matrix $A$. When $\rho$ is faithful (i.e. injective) and under certain assumptions on the group -- for instance if $G$ is a connected, compact matrix Lie group \citep[Corollary 11.10]{hall2015} 
the so-called exponential map of the group writes
\[
\begin{matrix}
   \exp_G :& \mathfrak{g}&\to & G\\
    & A & \mapsto &\rho^{-1}\circ\exp ( R(A))
\end{matrix}
\]
%\[
%\begin{matrix}
%   \widetilde{\exp} : \phi[\mathfrak{g}]\subset\mathfrak{gl}(V) &\to & \rho[G]\subset GL(V)\\
%     \phi(A)& \mapsto &\exp(\phi(A))
%\end{matrix}
%\]
and is surjective. %(i.e. $\exp \circ \, \phi[\mathfrak{g}]=\rho[G]$) %\bernhard{this looks wrong: exp is not a mapping into $G$, is it? Maybe the RHS should read $\rho(G)$?}
Therefore, the whole group $G$ can be described using elements of the Lie algebra. As a consequence, ${\exp}_G$ has a right inverse and for any group element $g$ we can write ${\exp}_G\circ {\exp}_G^{-1}(g)=g$, which leads to
\[
\exp \circ R \circ {\exp}_G^{-1}(g)=\rho(g)\,.
\]
Thus $\rho(g)$ can be decomposed as $\exp \circ \, \phi(g)$ with $\phi = R \circ {\exp}_G^{-1}$.
From an algorithmic perspective, we can thus learn $\rho$ by exploiting differentiable implementations of the matrix exponential \citep{Bader2019} and fit the mapping $\phi$ with a neural network and backpropagation. Because for different values $g\in G$, the elements $\phi(g)$ live in a vector space, they may be easier to learn than their  group representation $\rho(g)$ which lives on a manifold. For illustration of this aspect, the standard group and algebra representations for $SO(2)$ are, respectively,
\[
\rho(\theta)=\left[\begin{matrix}
    \cos(\theta)&\sin(\theta)\\-\sin(\theta)&\cos(\theta)
\end{matrix} \right]\quad\mbox{ and }\quad
R(\theta)=\theta \left[\begin{matrix}
    0&1\\-1&0
\end{matrix} \right]\,.
\]
%The exponential of an arbitrary matrix $A$ is given by the series $e^A = \sum_{k=0}^\infty \frac{1}{k!} A^k.$ 
%We assume the group of observed transitions $G$ is a connected compact Lie Group. The exponential map is surjective on it and the group representation can be obtained by composing the exponential map with a mapping to the Lie Algebra representation.
%\begin{equation*}
%    \rho: G \xrightarrow{\phi} \mathfrak{gl}_d \xrightarrow{exp} GL(Z)
%\end{equation*}
The Lie algebra structure can be leveraged even if we do not consider a set of transformations covering the whole Lie group. %, be for example a discrete subgroup.
For example, discrete image translations of an integer number of pixels along one dimension can be approximated (by enforcing periodicity at image boundaries) by a cyclic subgroup of the 2D rotation group $SO(2)$, where the rotation angle is constrained to be a multiple $2\pi/n$, with $n$ being the number of pixels of the image along this dimension.

\paragraph{Group action types}
The effect of a group action on a base space $X$ varies according to the properties of the homomorphism defined by the group action 
\begin{align*}
    \tau:&G \rightarrow Sym(X) \\
          &g \mapsto g \cdot_X \square
\end{align*}
We introduce two types of actions:
\begin{definition}[Transitive Group Action]
        The action of $G$ on $X$ is \emph{transitive} if $X$ forms a single \emph{orbit}. 
        
        in other words, $\forall x,y\in X,\exists g\in G; g\cdot x=y$.
\end{definition}        

\begin{definition}[Faithful Group Action]
        The action of $G$ on $X$ is \emph{faithful} if the homomorphism $G\rightarrow Sym(X)$ corresponding to the action is bijective (an isomorphism). 
        
        In that case, $\forall g_1\neq g_2 \in G, \exists x \in X;\quad g_1\cdot x \neq g_2 \cdot x$ .
\end{definition}

We also define the \emph{orbits} by a group action:
\begin{definition}[Orbit by a Group Action]
        The orbit of an element $x\in X$ by the action $\cdot_X$ of a group $G$ is the set
        \[
        G\cdot_X x = \{g\cdot_X x : g\in G\}
        \]
\end{definition}
When the action of $G$ is transitive on $X$, then $X$ is the single orbit by the action of $G$:
\[
\forall x \in X, G\cdot_X x = X
\]
Such is the case for our experiments using a single shape. We also explore the case where the action is not transitive in the multi shape experiment visualized in Figure~\ref{fig:muliobject-e}.

% ========================================================
\section{Theoretical foundations of HAEs}\label{app:proofs}

\subsection{Proofs}
The following propositions are established for more general losses than the ones defined in main text, which have been simplified for the sake of readability. In this setting, we consider sequences of a possibly large number $n$ of actions and we sum $N$-step losses for all possible subsequences and not only the sequences at $t=1$ (which is the setting of main text). This leads to the losses
\begin{equation}
    \mathcal{L}_{pred}^N (\rho,h)\!=\!\! \sum_t\sum_{j=1}^{N}  
\left\|h(o_{t+j}) \!-\! \Big(\prod_{i=0}^{j-1}\rho(g_{t+i})\Big)h(o_t)\right\|_2^2 \,,
\end{equation}
and
\begin{equation}
\mathcal{L}_{rec}^N(\rho,h,d)\! = \!\! \sum_t\sum_{j=0}^N 
\left\|o_{t+j} \!-\! d\left( \!\Big(\prod_{i=0}^{j-1}\rho(g_{t+i})\Big)h(o_t)\!\right)\right\|_2^2\,.
\end{equation}
Where the empty product $\prod_{i=0}^{-1}\rho(g_{t+i})$ is the identity matrix by convention. 

Additionally, given that we are considering settings where the observed data distribution is restricted to a submanifold of the ambient space (e.g. the pixel space), we introduce formal tools to model continuous random variables in such setting (we leave aside the easier case of a discrete state space). We restrict ourselves to smooth manifold structures, and introduce the notion of smooth positive manifold density, which essentially generalises the Lebesgue measure to manifolds for integration theory, and in particular, associates a strictly positive value to each open neighborhood of the manifold. We refer to \citep[Chapter 16]{lee2013smooth} for a detailed introduction of the concept. Such density always exist in such case (but is not unique) \citep[Proposition 16.37]{lee2013smooth}.
%\begin{prop}%[\citep[Proposition 16.37]{lee2013smooth}]
%    If $M$ is a smooth manifold, there exists a smooth positive manifold density on $M$.
%\end{prop}
Let $\mu$ be such a positive manifold density, then one can define a continuous random variable $w$ that have a \textit{compactly supported density} $p_w$ with respect to $\mu$ on $M$ in the following sense: there exist a continuous positive function $p_w: M\to \mathbb{R}$ with compact support, such that for any set $A$ in the Borel $\sigma$ algebra (the smallest containing all open sets), we have:
\[
P(w\in A) = \int_M p_w \mu\,.
\]
When applying a diffeomorphism to such variable, the result will also have a density that can be computed using the classical change of variable formula.

First we provide a formal statement of the main theoretical result of the paper together with its proof, and then proceed to prove the other propositions.

%\twosteps*
\begin{prop}\label{prop:twosteps}
%Under Assumption~\ref{assum:linearworld}, consider a setting where sample paths have a non-vanishing density with respect to the Haar measure of $G$. If $(\rho,h,d)$ minimizes the expectation of $\mathcal{L}_{pred}^2(\rho,h) + \gamma \mathcal{L}_{rec}^k(\rho,h,d)$, $k\geq 0$, then $\rho$ is a non-trivial group representation and $(\rho,h)$ is a symmetry-based representation. 
    Under generative model of Section~\ref{sec:sbdrl} with $b$ diffeomorphic onto its image and Assumption~\ref{assum:linearworld}, with a smooth positive manifold density $\mu$ on $W$. Consider a setting where the distribution of observations $(o_1,g_1,...,o_n)$ is generated according to Section~\ref{sec:sbdrl}, such that there exists $t\in \{1,\dots,n-1\}$ for which the corresponding world state-actions joint distribution $(w_t,g_t,g_{t+1})$ has a density $p$ with respect to $\mu\times \mu_G\times \mu_G$, where $\mu_H$ is the Haar probability measure of $G$ (i.e. the unique probability measure invariant by the action of any group element), and such that the support of $p$ takes the form $\mbox{supp}(p)=S_i\times G\times G$ where $S_i$ is a $G$ invariant distribution of initial states. Let $\gamma>0$, if $(\rho,h,d)$ are continuous and minimize the expectation of $\mathcal{L}_{pred}^2(\rho,h) + \gamma \mathcal{L}_{rec}^k(\rho,h,d)$, for $k\geq 0$, then $\rho$ can be restricted to the linear group $GL(V)$ of a vector subspace $V$ such that $\rho_{V}:G \to GL(V)$ is a non-trivial group representation and $(\rho_{V},h)$ is a symmetry-based representation. 
    %previous
    %If $(\rho,h)$ minimizes the expectation of $\mathcal{L}_{pred}^2(\rho,h)$ and $h$ is continuous injective, then $\rho$ can be restricted to the linear group $GL(V)$ of a vector subspace $V$ such that $\rho_{V}:G \to GL(V)$ is a non-trivial group representation and $(\rho_{V},h)$ is a symmetry-based representation. 
\end{prop}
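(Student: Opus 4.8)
The plan is to show that the objective is globally minimized exactly at the value zero, and then to read off the structural conclusions from the vanishing of each nonnegative term. First I would construct a reference triple $(\rho_0,h_0,d_0)$ attaining zero expected loss: using Assumption~\ref{assum:linearworld}, set $f_0=h_0\circ b=m$ (so that $h_0$ restricts to $m\circ b^{-1}$ on $b(W)$, extended continuously to the ambient observation space), take $\rho_0=\rho^*$ embedded in $GL(Z)$, and $d_0=b\circ m^{-1}$. Then $f_0(g\cdot_W w)=\rho^*(g)m(w)=\rho_0(g)f_0(w)$, so every prediction term vanishes, and $d_0(f_0(w))=b(w)$, so every reconstruction term vanishes; hence the infimum of the expected loss is $0$. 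Since the assumed minimizer $(\rho,h,d)$ attains this infimum, its expected loss is $0$. The integrand is a finite sum of nonnegative continuous functions of $(w_t,g_t,g_{t+1})$, and $p$ is a density with full support $S_i\times G\times G$; therefore each squared-norm term must vanish for every $(w,g,g')$ in this support.

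Second, writing $f=h\circ b$, the vanishing of the one-step term yields the equivariance identity $f(g\cdot_W w)=\rho(g)f(w)$ for all $w\in S_i,\ g\in G$, and the vanishing of the two-step term yields $f(g'g\cdot_W w)=\rho(g')\rho(g)f(w)$. I would then set $V=\mathrm{span}\{f(w):w\in S_i\}$, a finite-dimensional subspace. Because $S_i$ is $G$-invariant, $\rho(g)f(w)=f(g\cdot_W w)\in\{f(w')\}$, so $\rho(g)V\subseteq V$; moreover $\rho(g)\rho(g^{-1})f(w)=f(w)$ by applying equivariance twice, so $\rho(g)|_V$ is invertible, giving a well-defined $\rho_V:G\to GL(V)$. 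To see $\rho_V$ is a homomorphism, I compare the two ways of reaching $f(g'g\cdot_W w)$: equivariance with the single element $g'g$ gives $\rho(g'g)f(w)$, while the two-step relation gives $\rho(g')\rho(g)f(w)$; equating and using that the $f(w)$ span $V$ yields $\rho_V(g'g)=\rho_V(g')\rho_V(g)$, with $\rho_V(e)=I_V$ from the $g=e$ case. Thus $\rho_V$ is a genuine group representation and $f$ is $\rho_V$-equivariant, establishing conditions~1 and~2 of a symmetry-based representation on $V$.

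Third, I would invoke the reconstruction loss (its $j=0$ term, present for any $k\ge 0$ since $\gamma>0$) to rule out collapse, mirroring Proposition~\ref{prop:inject}: $d(f(w))=b(w)$ together with injectivity of the diffeomorphism $b$ forces $f|_{S_i}$ to be injective. If $\rho_V$ were trivial, then $f(g\cdot_W w)=\rho(g)f(w)=f(w)$ would make $f$ constant on every orbit; since the world action is non-trivial on $S_i$ (as $\rho^*$ is injective and $m$ a diffeomorphism, some orbit contains distinct points), this contradicts injectivity of $f|_{S_i}$. Hence $\rho_V$ is non-trivial, which completes the argument.

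The step I expect to be the main obstacle is the first one: making rigorous both the existence of a continuous zero-loss reference solution and the passage from ``zero expected loss'' to ``pointwise vanishing on the support.'' The former requires continuously extending $m\circ b^{-1}$ off the image $b(W)$ (e.g.\ via a Tietze-type extension) and choosing $Z$ of dimension at least $\dim W^*$ so that $\rho^*$ embeds into $GL(Z)$; any surplus dimensions of $Z$ are precisely what the restriction to $V$ discards. The latter relies on the smooth positive manifold density $\mu$ and on $\mathrm{supp}(p)=S_i\times G\times G$, so that a nonnegative continuous integrand with zero integral against a strictly positive density must vanish throughout the support; some care is also needed to confirm that $V$ is finite-dimensional and that each $\rho_V(g)$ is invertible, both of which follow from $G$-invariance of $S_i$ and the equivariance identity.
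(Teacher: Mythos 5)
Your proposal is correct and follows essentially the same route as the paper's proof: achievability of zero loss via the ground-truth triple $(\rho^*, m\circ b^{-1}, b\circ m^{-1})$, pointwise vanishing on the support from continuity and the positive density, the one-step versus two-step comparison on $V=\mathrm{span}(h\circ b(\mathrm{supp}(\mu_i)))$ to get the homomorphism property and $G$-stability of $V$, and the $0$-step reconstruction term (Proposition~\ref{prop:inject}) to rule out collapse. Your additional remarks --- the Tietze-type extension of $m\circ b^{-1}$ off $b(W)$, the dimension condition for embedding $\rho^*$ into $GL(Z)$, the explicit verification that $\rho_V(g)$ is invertible and $\rho_V(e)=I_V$, and the spelled-out non-triviality argument --- are refinements of points the paper leaves implicit, not a different argument.
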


%\begin{reptheorem}{prop:twosteps}
%That theorem again
%\end{reptheorem}

\begin{proof}
%Given that the group $G$ is supposed compact, it admits a group representation by the \emph{Peter-Weyl theorem}. 
Under Assumption~\ref{assum:linearworld}, the true state space $W$ is acted upon by $G$ through its representation $(\rho^*,W^*)$. As such, $m \circ b^{-1}$, the inverse of the generating process composed with the injective mapping $m$ of the assumption, and $\rho^*$ verify $\mathcal{L}_{pred}^2(\rho^*,m\circ b^{-1}) =0$ and $\mathcal{L}_{rec}^k(\rho^*,m\circ b^{-1}, b\circ m^{-1}) =0$. 

As a consequence, if $(\rho,h,d)$ minimizes the full loss, it is a zero of this loss, which implies it is a zero of both the reconstruction and prediction losses.

This further implies that $(\rho,h,d)$ minimizes the 0-step reconstruction loss, which entails that $h$ is injective by Proposition~\ref{prop:inject}.

We assumed $(\rho, h)$ minimizes $\mathbb{E}_{(o_1,g_1,...,o_n)}[\mathcal{L}_{pred}^2(\rho,h)]$ therefore  $\mathbb{E}_{(o_1,g_1,...,o_n)}[\mathcal{L}_{pred}^2(\rho,h)]=0$.

%Since sample paths are drawn from  2-step transitions $(o_t, g_t, o_{t+1},g_{t+1},o_{t+2})$ --- note that 
Observed transitions $(o_t, g_t, o_{t+1})$ correspond to an action on the true world states $w_{t+1} = g_t \cdot_W w_t$ --- $(\rho, h)$. 
Under our assumptions, by contradiction, any 2-step sequence in the support of the distribution verifies:
\begin{equation}\label{eq:onestepeq}
\rho(g_t) h(o_t) = h(o_{t+1})
\end{equation}
and
\begin{equation}\label{eq:twostepeq}
\rho(g_{t+1})\rho(g_{t}) h(o_{t}) = h(o_{t+2})\,.    
\end{equation}
Indeed, assuming otherwise, there would exist a value $(w_t, g_t, w_{t+1}, g_{t+1}, w_{t+2})$ where this is not satisfied. By continuity of $h$, $\rho$ and the norm, there would exist a neighborhood $U$ such that for all $(w_t', g_t', w_{t+1}', g_{t+1}', w_{t+2}')\in U$ either
\[
\|\rho(g_t) h(o_t) - h(o_{t+1})\|^2>0
\]
or
\[
\|\rho(g_{t+1}')\rho(g_{t}') h(o_{t}') - h(o_{t+2}')\|^2>0\,.
\]
and since the distribution has strictly positive density, integrating norms over the neighborhood would lead to a strictly positive loss and thus to a contradiction.

Let us prove $\rho$ is a group representation, meaning it verifies $\rho(g_2 g_1) =\rho(g_2) \rho(g_1), \forall g_1, g_2\in G$.

Let $g_1, g_2, g_3 \in G$ such that $ g_3 = g_2 g_1$.

Let $w_1$ in the support of $\mu_i$, then by assumption $w_2, w_3$ which verify $w_2 = g_1\cdot_W w_1$, $w_3 = g_2\cdot_W w_2= g_3\cdot_W w_1$ also belong to the support of $\mu_i$ and we have the following commutative diagram for the associated observed transitions. %Therefore, the associated transitions $(o_1,g_1,o_2)$, $(o_2,g_2,o_3)$ and $(o_1,g_3,o_3)$ can be observed. 
\begin{figure}[h!]
    \centering
    \includegraphics[width=0.4\textwidth]{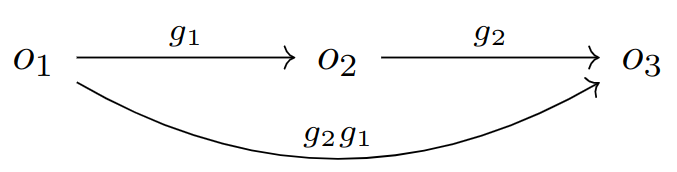}
    \label{fig:1step2steps}
\end{figure}

Consider then any 2-step transitions 
$(o_t,g_t,o_{t+1},g_{t+1},o_{t+2})$ and $(o_{t}',g_{t}',o_{t+1}',g_{t+1}',o_{t+2}')$ such that:
\[
\begin{cases}
o_t = o_{t}' = o_1\\ 
o_{t+1} = o_2\\
o_{t+2} = o_{t+1}' = o_3\\
\end{cases}
\]
and
\[
\begin{cases}
g_t = g_1\\
g_{t+1} = g_2\\
g_{t}' = g_3 = g_2g_1\,.\\
\end{cases}
\]
Those belong to the support of the sample path distribution by assumption, such that,
using (\ref{eq:twostepeq}), 
$\mathcal{L}_{pred}^2 = 0$ implies:
$\rho(g_{t+1})\rho(g_t)h(o_t) = h(o_{t+2})$ therefore $\rho(g_2)\rho(g_1)h(o_1) = h(o_3)$.
Moreover, using (\ref{eq:onestepeq}) %for the transition at $t'$:
$\rho(g_{t}')h(o_{t}') = h(o_{t+1}')$ therefore $\rho(g_3)h(o_1) = h(o_3)$.

Therefore we have $\rho(g_2)\rho(g_1)h(o_1) = \rho(g_3)h(o_1)$ or $\rho(g_2)\rho(g_1)h(o_1) = \rho(g_2 g_1)h(o_1)$

Let us define $V= \mbox{span} (h\circ b(\mbox{supp}(\mu_i)))$. As the above equality is verified for any $o_1$ in the image of the support of $\mu_{i}$ by $b$ % --- meaning we observe two successions of the action of group elements $(g_1,g_2)$ and $g_3$ for different values of the starting observation $o_1$--- by assuming $h(O) \subseteq span(h(O_{tr}))$,
we get by linearity that $\rho(g_2)\rho(g_1) z = \rho(g_2 g_1)z$ for all $z\in V$ (equality of linear mappings over vectors that span a vector subspace). 

Moreover, $V$ is stable by $\rho$. Indeed, %by contradiction, 
consider one element $z\in V$ and $g\in G$, then
$z=\sum_k h\circ b(w_k)$, with all $w_k \in \mbox{supp}(\mu_i)$. As a consequence, they satisfy (using (\ref{eq:onestepeq}))
\[
\rho(g) h\circ b(w_{k}) = h\circ b(g \cdot_W w_{k})\in h\circ b(\mbox{supp}(\mu_i))\,,
\]
for all $k$, by assumed $G$-stability of $\mbox{supp}(\mu_i)$. As a consequence 
\[
\rho(g) z = \sum_k \rho(g) h\circ b(w_{k}) \in \mbox{span}(h\circ b(\mbox{supp}(\mu_i)))=V\,,
\]
%Since $\mbox{supp}(\mu_i)$ is stable by $G$, then
%such that $\rho_{g}z\not\in V$ since by assumption $\mbox{supp}(\mu_i)$ is stable by $G$. 
Therefore the restriction of $\rho_V$ of $\rho$ to $V$ satisfies $\rho_{V}(g_1 g_2)=\rho_{V}(g_1)\rho_{V} (g_2)$ and is a group representation of $G$. 

The injectivity assumption ensures $h(O)$ does not collapse to a single element.

Let us show $h$ is a group-structured representation.

By (\ref{eq:onestepeq}), we have for every observed transition $(w_t, g_t)$ (i.e. $w_t$ taking any value in the support of the density $\mu_i$ and $g_t$ taking any value in $G$)
\[
h(o_{t+1}) = \rho(g_{t})h(o_t)
\]
where $o_t=b(w_t)$ and $o_{t+1}=b(w_{t+1})$, by the generative model assumptions.
Then 
\[
o_{t+1} = b(w_{t+1}) = b(g_t \cdot_W w_t)
\]
such that
\[
h \circ b (g_t \cdot_W w_t) = \rho(g_t)h \circ b(w_t)\,.
\]

\end{proof}
\paragraph{Remarks.} We provide some justification for the assumption on the distribution of the observed sample paths for steps $t,..., t+2$ exploited in the formal version of Proposition~\ref{prop:twosteps}. Taking a $G$-invariant support $S_i$ associated to the distribution of states at time $t$ ensures that the same states are visited at $t$ and subsequent steps. This can be typically achieved by choosing a time $t$ such that the transition form initial states $w_1$ have reached all points of the orbits of this state such those are associated with a positive density. Conditions to achieve this with specific choice of transition probability can be studied with the framework of Markov chains on general spaces, and relate to the concept of ``mixing''. The restriction to have $W$ with a manifold structure addresses the case of uncountable infinite world states, while maintaining enough structure to be able to define probability densities. Implicitly, the restricts the result to those manifold where the Lebesgue measure can be defined, which covers the broad case of oriented Riemannian manifolds. Another version of our result is easily obtained in the simpler case of a countable set of world states. 
For simplicity, we also assume a positive density for transitions $g_t$ and $g_{t+1}$ with respect to the Haar measure. This allows us to prove our result by only focusing on 2-step transitions. However, this may be considered a strong assumption when comparing to practical settings were group elements for each transitions may not be selected from the whole group. For example, for connected Lie groups, it may be more natural to consider transition picked from a neighborhood of the identity (e.g. rotation with a maximal angle), which limit the amount of changes from one state to the next. We believe it is possible to generalize our result to such setting by exploiting a broader range of time points of the observed sequences, although this would require more technical assumptions and proof developments. Finally, the resulting restriction of the representation to subspace $V$ essentially reflects the boundaries of the explored state space, restricted to the support of $\mu_i$.

We now provide some more additional theoretical results related to the HAE structure and learning objective.

\begin{restatable}{prop}{gmap}\label{prop:gmap}
Under generative model of Section~\ref{sec:sbdrl} with $b$ diffeomorphic onto its image and Assumption~\ref{assum:linearworld}, we assume additionally that $W$ has a smooth manifold structure with smooth positive manifold density $\mu$. Consider a setting where the distribution of observations $(o_1,g_1,...,o_n)$ is generated according to Section~\ref{sec:sbdrl}, such that there exists $t\in \{1,\dots,n-1\}$ for which the corresponding world state-actions joint distribution $(w_t,g_t,g_{t+1})$ has a density $p$ with respect to $\mu\times \mu_G\times \mu_G$, where $\mu_H$ is the Haar probability measure of $G$ (i.e. the unique probability measure invariant by the action of any group element), and such that the support of $p$ takes the form $\mbox{supp}(p)=S_i\times G\times G$ where $S_i$ is a $G$ invariant distribution of initial states.
%Under Assumption~\ref{assum:linearworld}, we assume additionally that $W$ has a manifold structure with well defined Lebesgue measure. Consider a setting where there exists $t\geq 1$ such that the distribution of the subset of the sample paths $(w_t,g_t)$ has a non-vanishing density with respect to $\mu_i\times \mu_G$, where $\mu_i$ is a measure on initial states with $G$-invariant support and whose restriction on its support is the Lebesgue measure,
%is a probability measure on initial states with $G$-invariant support 
%and $\mu_H$ is the Haar probability measure of $G$ (i.e. the unique probability measure invariant by the action of any group element). 
Assume we have access to the ground truth representation $\rho^*$. If $h$ minimizes the expectation of $\mathcal{L}_{pred}^1(\rho^*,h)$ then $h$ is a symmetry-based representation, meaning $h\circ b$ is equivariant.
\end{restatable}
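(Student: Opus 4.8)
The plan is to exploit that the ground-truth inference map already realizes a zero of the prediction loss, so that any minimizer must also attain the value zero, and then to upgrade this global ``the integral vanishes'' statement into a pointwise equivariance identity using the positive-density and continuity hypotheses. Since $\rho^*$ is a genuine group representation that is handed to us, no homomorphism property needs to be established here; this is what makes the $1$-step loss sufficient, in contrast to Proposition~\ref{prop:twosteps}.

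First I would exhibit a canonical minimizer. Under Assumption~\ref{assum:linearworld}, applying $m$ to the identity $g\cdot_W w = m^{-1}(\rho^*(g)m(w))$ yields $m(g\cdot_W w)=\rho^*(g)\,m(w)$, i.e. $m$ is equivariant. Taking $h^\star = m\circ b^{-1}$ on the image of $b$ (which is continuous, as a composition of a diffeomorphism onto its image with the inverse of another, and confirms $Z=W^*$) gives $f^\star = h^\star\circ b = m$, so that every summand $\|f^\star(g\cdot_W w)-\rho^*(g)f^\star(w)\|_2^2$ vanishes and $\mathbb{E}[\mathcal{L}_{pred}^1(\rho^*,h^\star)]=0$. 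As $\mathcal{L}_{pred}^1$ is a sum of squared norms, it is non-negative, so the infimum of the expected loss is exactly $0$ and any minimizer $h$ attains it, giving $\mathbb{E}[\mathcal{L}_{pred}^1(\rho^*,h)]=0$.

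Next I would convert this into a pointwise equality. Focusing on the distinguished time $t$ of the hypothesis, the relevant summand reads $\mathbb{E}_{(w_t,g_t)}[\|h(o_{t+1})-\rho^*(g_t)h(o_t)\|_2^2]=0$, where $o_t=b(w_t)$ and $o_{t+1}=b(g_t\cdot_W w_t)$, and $(w_t,g_t)$ has strictly positive density on $S_i\times G$. Assuming $h$ continuous (as it is when parametrized by a neural network), the integrand is continuous and non-negative; were it strictly positive at some point of the support, continuity would produce a whole neighborhood of positive density on which it stays positive, forcing a strictly positive integral. This is exactly the neighborhood/contradiction argument used in the proof of Proposition~\ref{prop:twosteps}, and I expect it to be the main technical step. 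It yields $h(o_{t+1})=\rho^*(g_t)h(o_t)$ for every $(w_t,g_t)\in S_i\times G$.

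Finally I would read this identity as equivariance. Substituting $o_t=b(w_t)$ and $o_{t+1}=b(g_t\cdot_W w_t)$ gives $h\circ b(g\cdot_W w)=\rho^*(g)\,(h\circ b)(w)$ for all $w\in S_i$ and $g\in G$, where the $G$-invariance of $S_i$ ensures $g\cdot_W w$ stays in $S_i$ so both sides are defined. Identifying the linear action on $Z$ with $z\mapsto\rho^*(g)z$, this is precisely condition~2 of Section~\ref{sec:sbdrl}, so $f=h\circ b$ is equivariant and $(\rho^*,h)$ is a symmetry-based representation on the explored support. Note that, unlike Proposition~\ref{prop:twosteps}, no reconstruction loss and no injectivity of $h$ is required, since the conclusion only asserts equivariance and not non-triviality; indeed even the degenerate $h\equiv 0$ satisfies it.
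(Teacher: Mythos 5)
Your proposal is correct and follows essentially the same route as the paper's proof: exhibit $(\rho^*, m\circ b^{-1})$ as a zero of the expected loss so any minimizer also attains zero, upgrade the vanishing expectation to the pointwise identity $h(o_{t+1})=\rho^*(g_t)h(o_t)$ on $S_i\times G$ via the continuity/positive-density contradiction argument borrowed from Proposition~\ref{prop:twosteps}, and then substitute $o_t=b(w_t)$, $o_{t+1}=b(g_t\cdot_W w_t)$ to read off equivariance of $h\circ b$. Your additional observations --- that $m$'s equivariance follows directly from Assumption~\ref{assum:linearworld}, that continuity of $h$ must be assumed for the pointwise step, and that the conclusion tolerates degenerate minimizers such as $h\equiv 0$ (which is exactly why the paper pairs this result with Propositions~\ref{prop:trivial} and~\ref{prop:inject}) --- are accurate refinements of the paper's more terse write-up, not deviations from it.
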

% \gmap*
\begin{proof}[Proof]
Under Assumption~\ref{assum:linearworld}, the true state space $W$ is acted upon by $G$ through its representation $(\rho^*,W^*)$. As such the inverse of the generating process composed with the injective mapping $m$ of the assumption $m \circ b^{-1}$ and $\rho^*$ verify $\mathcal{L}_{pred}^1(\rho^*,m\circ b^{-1}) =0$.

%Given that the group $G$ is supposed compact, it admits a group representation by the \emph{Peter-Weyl theorem}. We can therefore assume that the true state space $W$ is acted on linearly by $G$ through its representation $\rho^*$. As such the inverse of the generating process $b^{-1}$ and $\rho^*$ verify $\mathcal{L}_{pred}^2(\rho^*,b^{-1}) =0$.

As a consequence, $h$ also achieves zero expected loss such that, following the same reasoning as the one leading to (\ref{eq:onestepeq}) in the proof of Proposition~\ref{prop:twosteps}
%\[
%\sum_{t} \sum_{j=1}^{N} ||h(o_{t+j}) - %\prod_{i=0}^{j-1}\rho^*(g_{t+i})h(o_t)||_2^2
%\]
%such that for all $(j,t)$
%\[
%h(o_{t+j}) = \prod_{i=0}^{j-1}\rho^*%(g_{t+i})h(o_t)
%\]
%In particular for $j=1$
for all $o_t=b(w_t)$ with $w_t\in \mbox{supp}(\mu_i)$
\[
h(o_{t+1}) = \rho^*(g_{t})h(o_t)\,.
\]
by the generative model assumptions
\[
o_{t+1} = b(w_{t+1}) = b(\rho^*(g_t) w_t)\,.
\]
Also $o_t = b(w_t)$ such that
\[
h \circ b (\rho^*(g_t) w_t) = \rho^*(g_t)h \circ b(w_t)\,.
\]
\end{proof}
Proposition~\ref{prop:gmap} is clearly weaker than Proposition~\ref{prop:twosteps} notably in the sense that it requires assuming the knowledge of $\rho^*$. This is due to the limitation of the 1-step prediction loss. 

The next proposition further illustrates that  non-injectivity of $h$ may yield unstructured (trivial) representations when learning based on the prediction loss.

\begin{prop}\label{prop:trivial}
The trivial group representation $\rho = I$ (that always maps to the identity matrix) combined with a constant $h$ is a zero of the prediction loss $\mathcal{L}_{pred}^1 (\rho,h)$.
\end{prop}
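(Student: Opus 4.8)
The plan is to directly evaluate the one-step prediction loss on the proposed degenerate pair $(\rho,h)$ and verify that every summand vanishes. First I would specialize the definition of the $N$-step prediction loss to $N=1$, which yields
\[
\mathcal{L}_{pred}^1(\rho,h) = \sum_t \left\| h(o_{t+1}) - \rho(g_t)\, h(o_t) \right\|_2^2\,,
\]
a sum of nonnegative terms. Hence it suffices to show that each individual term equals zero.

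Next I would substitute the two ingredients of the claimed solution. Writing $h\equiv c$ for the constant value that $h$ takes on every observation, and using $\rho(g)=I$ for all $g\in G$, each summand collapses to $\| c - I c \|_2^2 = \| c - c \|_2^2 = 0$. Summing over $t$ then gives $\mathcal{L}_{pred}^1(\rho,h)=0$, establishing the claim. Since the loss is a sum of squared norms, it is bounded below by zero, so this value is in fact a global minimum.

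There is essentially no technical obstacle here: the statement is a sanity check whose purpose is conceptual rather than computational, illustrating that the one-step prediction loss alone cannot prevent representational collapse. This is precisely what motivates adding the reconstruction loss to enforce injectivity of $h$ (cf. Proposition~\ref{prop:inject}) and moving to the two-step objective of Proposition~\ref{prop:twosteps}. The only point worth remarking in the write-up is that $\rho=I$ is a genuine group representation, since $\rho(g_1 g_2)=I=I\cdot I=\rho(g_1)\rho(g_2)$, so the trivial solution really does lie inside the feasible set over which the loss is being minimized; the proposition therefore shows that minimizing $\mathcal{L}_{pred}^1$ on its own admits uninformative minimizers.
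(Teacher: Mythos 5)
Your proposal is correct and matches the paper's argument, which is itself only a one-line sketch: substitute the constant encoder and the identity-valued $\rho$ and observe that every summand of $\mathcal{L}_{pred}^1$ vanishes. Your additional remarks (nonnegativity making this a global minimum, and the trivial $\rho$ being a genuine group representation) are sound elaborations of the same idea rather than a different route.
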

\begin{proof}[Proof (sketch)]
    Choosing constant mappings $h(o)=C$ and $\rho_{g}=1$ for all $(o,g)$ trivially leads to a vanishing prediction loss.
\end{proof}

The next proposition shows that this injectivity can be enforced using the reconstruction loss.

% \inject*
\begin{restatable}{prop}{inject}\label{prop:inject}
Consider the setting of Proposition 1. % where there exists $t\geq 1$ such that the distribution of the states $w_t$ has a non-vanishing density with respect to $\mu_i$, where $\mu_i$ is smooth positive manifold density. 
Assume the encoder $h$ and the decoder $d$
are continuous and minimize $\mathcal{L}_{rec}^0 (\rho,h,d)$ then $h$ is injective. 
\end{restatable}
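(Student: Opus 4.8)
The plan is to reduce the $0$-step reconstruction loss to the ordinary autoencoder objective and then to upgrade a vanishing expected loss into a pointwise left-inverse identity using the positive-density assumption. First I would observe that for $N=0$ only the $j=0$ term survives and the corresponding product is empty, so $\mathcal{L}_{rec}^0(\rho,h,d)$ does not depend on $\rho$ and its expectation equals $\mathbb{E}_{o}\big[\|o - d(h(o))\|_2^2\big]$, where $o=b(w)$ and $w$ is drawn with the compactly supported density $\mu_i$ of the setting of Proposition~\ref{prop:twosteps}. The infimum of this nonnegative objective over continuous pairs is $0$ and is attained: taking $h=m\circ b^{-1}$ and $d=b\circ m^{-1}$ (well defined and continuous since $b$ is a diffeomorphism onto its image and $m$ is diffeomorphic onto its image, exactly the pair used in the proof of Proposition~\ref{prop:twosteps}) gives $d\circ h=\mathrm{id}$ and hence zero loss. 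Consequently any minimizing pair $(h,d)$ also attains $\mathbb{E}_{o}\big[\|o-d(h(o))\|_2^2\big]=0$.

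Next I would turn this vanishing expectation into a pointwise statement on the support. The integrand $o\mapsto \|o-d(h(o))\|_2^2$ is continuous and nonnegative because $h$ and $d$ are continuous, and its integral against a measure with strictly positive density on its support vanishes. I would argue by contradiction exactly as in the proof of Proposition~\ref{prop:twosteps}: if $d(h(o_\star))\neq o_\star$ for some $o_\star=b(w_\star)$ with $w_\star\in\mathrm{supp}(\mu_i)$, continuity provides a neighborhood of $o_\star$ on which the integrand is bounded below by a positive constant, and the positive manifold density assigns this neighborhood strictly positive mass, forcing a strictly positive expected loss, a contradiction. Hence $d(h(o))=o$ for every $o$ in the support of the observation distribution, i.e. $d\circ h=\mathrm{id}$ on $b(\mathrm{supp}(\mu_i))$.

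Finally, injectivity follows immediately: if $h(o_1)=h(o_2)$ for two observations in the support, then $o_1=d(h(o_1))=d(h(o_2))=o_2$, so $h$ is injective on the relevant domain, which is precisely what the proof of Proposition~\ref{prop:twosteps} invokes to prevent the latent image from collapsing. I expect the only delicate point to be the measure-theoretic step transferring zero expected loss to pointwise equality on the support; this is where continuity of $h$ and $d$ together with the smooth positive manifold density of $\mu$ are essential, and it reuses the same mechanism already established for the prediction loss. Identifying the minimal loss value and deducing injectivity from the existence of a left inverse are then routine.
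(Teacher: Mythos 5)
Your proposal is correct and takes essentially the same route as the paper's own proof: both establish that the minimum of $\mathcal{L}_{rec}^0$ is zero (witnessed by the ground-truth pair built from $b$), upgrade the vanishing expected loss to the pointwise identity $d\circ h=\mathrm{id}$ on the support via the continuity-plus-positive-density contradiction argument already used for Proposition~\ref{prop:twosteps}, and deduce injectivity of $h$ from the existence of this left inverse. If anything, your write-up spells out the measure-theoretic step and the irrelevance of $\rho$ more explicitly than the paper's rather terse version, but the underlying argument is identical.
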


\begin{proof}
Assume the encoder $h$ and the decoder $d$  are continuous and minimize the 0-step reconstruction. Then the loss must be zero as choosing the left inverse $b^{-1}$ as encoder (and $b$ as decoder) achieves such loss value. 
\[
\forall o, o' \in O, \text{such that } o \neq o'.
\]
Then by continuity of $d$ and $h$
\[
d(h(o)) = o \text{ and } d(h(o')) = o'\,,
\]
as otherwise the expected loss would be stricly positive (similarly as the reasoning in Proposition~1).
Therefore
\[
d(h(o))\neq d(h(o')),\,
\]
which implies
\[
h(o) \neq h(o')\,.
\]

Therefore $h$ is injective.

\end{proof}

\subsection{Disentanglement through Sparsity Loss}\label{app:softblock}
To drop the assumption of prior knowledge of the number of subgroups in the representation as well as the dimension for their representation, we propose a sparsity loss (Equation \ref{eq:softblock}) on the group representation that prefers block-diagonal patterns (Example in Figure~\ref{fig:softblock}) by minimizing the terms of $(\rho_{ij}(g_t))_{ij}$ that do not belong to allowed non-zero terms. The loss is inspired from works on sparsity inducing losses \citep{Bach2011}.

\begin{equation*}\label{eq:softblocksupp}
    \mathcal{L}_{sparse}(\rho) = \sum_t \sum_{i\geq 0} \sqrt{ \sum_{j\geq i+1,\,k\leq i}\rho_{kj}(g_t)^2 + \rho_{jk}(g_t)^2 }\,.
\end{equation*}

The model is then trained on the composite loss:
\begin{equation*}
    \mathcal{L} = \mathcal{L}_{rec} + \gamma * \mathcal{L}_{pred} + 
    \delta *
    \mathcal{L}_{sparse}
\end{equation*}

We can then choose a representation space $Z=\RR^d$ with $d$ large enough to accommodate the total dimension of the group representation, extra dimensions are then trivialized by $\mathcal{L}_{sparse}$.

Figure \ref{fig:softblock} shows how the term associated with each value of $i$ induces a direct sum of 2 subrepresentations $\rho = \rho_1 \oplus \rho_2$ with a different split of the total representation dimension. Combining patterns produces any possible decomposition into a direct sum of subrepresentations.

\begin{figure}
    \centering
    \includegraphics[width=0.7\textwidth]{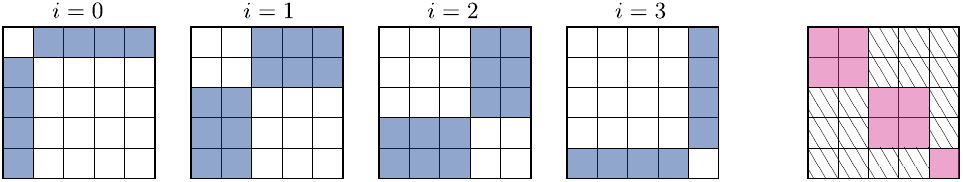}
    \caption{Left: Non allowed non-zero regions induced by each value of the index $i$ on a representation of dimension $5$. Right: an example pattern resulting from the activations of the terms corresponding to $i\in\{1,3\}$.}
    \label{fig:softblock}
\end{figure}

\section{Experiment: Learning representations structured by $G=SO(2)\times ... \times SO(2)$}\label{app:exp-single-shape}

\subsection{Setup}\label{app:setup}
We consider a subset of the dsprites dataset \citep{dsprites17} where a fixed scale and orientation heart is acted on by the group of 2D cyclic translations $G = C_x \times C_y$. $G$ is a discrete subgroup of $SO(2)\times SO(2)$.
The corresponding transition dataset contains tuples $(o_1,g_1,o_2,...,g_{N-1},o_N)$, where the observations $o_i$ are $64\times 64$ pixels and the transitions are given by $\varphi(g)$ where ${g = (g^x,g^y)\in G}$ represents the angular displacement and $\varphi$ is a two layer multi-layer perceptron, radomly initialized and fixed throughout the experiment. $\varphi(g)$ is a $50$-dimensional vector of non-linear mixture of $g^x$ and $g^y$. Details for $\varphi$ are summarized in Table~\ref{tab:varphi}.

\begin{table}[h!]
    \centering
    \caption{$\varphi$ architecture.}
    \begin{tabular}{cc}
        Parameter & Value \\ \midrule
        Linear Layers & [200, 50] \\
        Activation & ReLU \\
        Random Seed & 10\\
    \end{tabular}
    \label{tab:varphi}
\end{table}

\subsection{Learning a disentangled representation}
\subsubsection{Hyperparameters}\label{app:hp}
\paragraph{Model architecture}
We use a symmetrical architecture for the encoder and decoder, which we summarize in Table \ref{tab:network}.
The network was trained on the combined loss:
\[
\mathcal{L} = \mathcal{L}_{rec}^2(\rho,h,d) + \gamma \mathcal{L}_{pred}^2(\rho,h) + 
\delta
\mathcal{L}_{sparse}(\rho)
\]
Where we use the Binary Cross Entropy loss for the reconstruction term instead of the Mean Squared Error as it is better behaved during training.
\begin{table}[h!]
    \centering
    \caption{Network architecture.}
    \begin{tabular}{cc}
        Parameter & Value \\ \midrule
        Conv. Channels & [32, 32, 32, 32] \\
        Kernel Sizes & [6, 4, 4, 4] \\
        Strides & [2, 2, 1, 1] \\
        Linear Layer Size & 1024 \\
        Activation & ReLU \\
        Latent space & 4\\
        $\gamma$ & 400\\
        $\rho$ dimension & 8\\
        $\rho$ Linear Layers & [1024]\\
        $\rho$ activation & ReLU\\
        $\delta$ & $0.1$
    \end{tabular}
    \label{tab:network}
\end{table}
        % "--latent_loss",
        % "--latent_loss_weight=400",

\paragraph{Training hyperparameters}
We trained the network using the hyperparameters summarized in Table~\ref{tab:hp}.
\begin{table}[h!]
    \centering
    \caption{Training hyperparameters.}
    \begin{tabular}{cc}
        Parameter & Value \\ \midrule
        Optimizer & Adam \\
        Learning rate & 0.001 \\
        Number of training sequences & 10000 \\
        Batch size & 500 \\
        Epochs & 101 \\
    \end{tabular}
    \label{tab:hp}
\end{table}

\subsubsection{Visualization}
We obtained manifold Figures~\ref{fig:manifold}~and~\ref{fig:muliobject-e}  by projecting the $D$-dimensional representation vectors of all images in the dataset on a random $2$D plane through Random Matrix Projection. We chose the projection with the most explainable visualization. 

\subsubsection{Representation of a Neighbourhood of Identity}
Figure~\ref{fig:moreactions} displays $\rho$ over a wider neighbourhood of the identity. 

\begin{figure}
    \centering
    \includegraphics[width=0.6\linewidth]{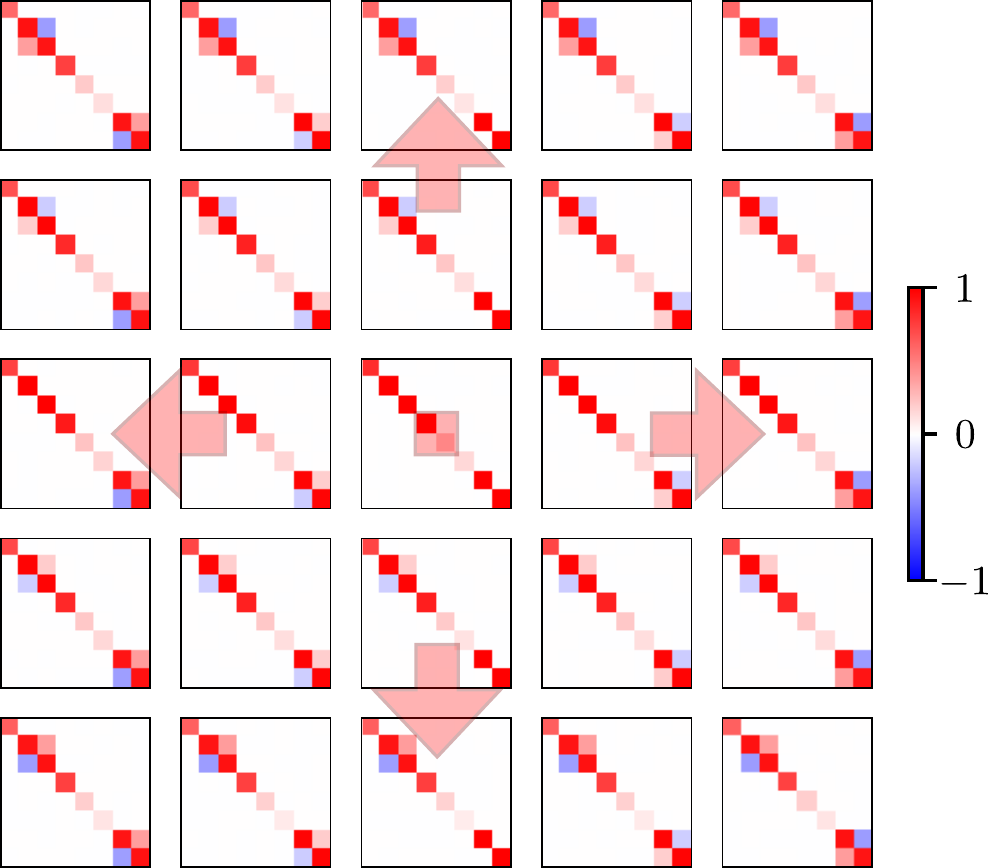}
    \caption{The representations of actions (horizontal and vertical displacements) in a square $\llbracket -2,2\rrbracket\times \llbracket-2,2\rrbracket$ around the identity for the dSprites experiment show a homomorphism structure.}
    \label{fig:moreactions}
\end{figure}

\subsubsection{Latent Traversal}\label{app:exp-alg}

We show how learning a mapping to the group algebra can be leveraged to navigate the group and the data manifold. We remind that $\rho = \exp\circ \phi$, where $\phi$ maps to the algebra $\mathfrak{g}$ of the group $G$, and $\exp$ is the matrix exponential which gives a connection between the algebra and the lie group.

The mapping $\phi$ and the group representation $\rho$ are constrained to be block diagonal matrices either through the strong constraint as seen in Section~\ref{sec:dis-grp-repr} or the soft constraint which uses a sparsity loss on $\rho$ described in Section~\ref{sec:soft-dis}. 
We consider the soft constraints case where no assumptions are made about the size of the matrices.
And we use the model trained as in Section~\ref{sec:exp_torus}.
Our matrices $\phi(g), \rho(g)$ are of dimension $8$.
However, we know that we overparametrized our group representation so we use PCA over the Algebra representation of a batch of transition signals $\phi(g)$ and find two principal components $A_1$ and $A_2$ that are matrices 
equal to zero except for two 2D skew symmetric blocks.

% We find a basis of the lie algebra defining the non trivial part of our group representations by performing PCA on $\{\phi_1(g_t)\}_t$ and $\{\phi_2(g_t)\}_t$ for a batch $\{g_t\}_t$ of observed transitions. 
% The first component for each block $E_1$ and $E_2$ corresponds to the only base vector for that subalgebra. We find:
% \[
% E_1\oplus 0_{2,2} =     \begin{bmatrix}
%     -0.04  &-0.65  &0  &0  \\
%     0.76  &-0.04  &0  &0  \\
%     0  &0  &0  &0  \\
%     0  &0  &0  &0  \\
% \end{bmatrix}
% \]
% \[
% 0_{2,2} \oplus E_2 \approx
% \begin{bmatrix}
%     0  &0  &0  &0  \\
%     0  &0  &0  &0  \\
%     0  &0  &0.01  &-0.65  \\
%     0  &0  &0.76  &0.01  \\
% \end{bmatrix}
% \]
We obtain the figure~\ref{fig:traversal} by linearly traversing the algebra along its base vectors through ${tA_1}$ and  ${tA_2}$ for equally spaced values of $t\in \llbracket 0,10 \rrbracket $ and passing it to the matrix exponential which yields invertible matrices of the form ${R_{i,t} = e^{tA_i}}$. 
We encode an arbitrary initial observation to obtain its representation vector $z$, and traverse the latent space through $R_{i,t} z$. 
We decode the obtained vectors to obtain the predicted images.

The group algebra offers a smooth parametrization of the group and consequently of the data manifold and enables the prediction of observations evolution in the absence of performed actions. 
Indeed, in the above example, all transformations can be obtained in the form ${\exp(t_1 A_1 + t_2 A_2)}$ for $t_1, t_2 \in \RR$.
In addition, this parametrization allows us to obtain actions from the whole group, beyond the discretization considered and the limitation of the range of actions.

% \begin{figure}[ht]
%     \centering
%     \includegraphics[width=0.45\textwidth]{figs/results/hrt-traversals.pdf}
%     \caption{We visualize the linear traversal of the group algebra for the dSprites experiment and its effect on the predicted image reconstruction. The first row corresponds to a traversal $tA_1$ (horizontal displacement), while the second row corresponds to the traversal $tA_2$ (vertical displacement).}
%     \label{fig:traversal}
% \end{figure}

\subsubsection{Sparsity Based Disentanglement}\label{app:exp-softblock}
We show in Figure~\ref{fig:softblock} the learned group representation for the same dataset used in the experiment described in Appendix~\ref{app:setup} where instead of enforcing a block diagonal structure, it emerges under the influence of the sparsity inducing loss described in Appendix~\ref{app:softblock}. We do not assume knowledge of the representation dimension and we set it to a value high enough ($8$ in the example) and we find the HAE learns the disentangled representation found in the main experiment and trivializes the extra dimensions.  
% \begin{figure}[hb]
%     \centering
%     \includegraphics[width=0.8\linewidth]{}
%     \caption{Learned representation of the actions for the dSprites experiment through sparsity loss. Note that we assume no prior knowledge of the number of subgroups or the dimension of their representation. Displayed representation matrices for example actions.}
%     \label{fig:softblockmat}
% \end{figure}

\subsubsection{Additional Experiment}\label{app:add-exp}
We consider a subset of the dataset consisting of all variations of the heart under a fixed scale. As such the heart is acted on by the group $G=G_{\theta}\times G_x \times G_y = C_{39}\times C_{32} \times C_{32}$. We train a similar model to the one described in the section~\ref{app:hp} by changing the latent space to $6$ dimensions, the group representation is fixed of the form $\rho = \rho_1 \oplus \rho_2 \oplus \rho_3$ each of dimension $2$. We expect a disentangled representation space $Z=Z_1\oplus Z_2\oplus Z_3$. For the visualization of the learned representation manifold Figure~\ref{fig:xytheta}, we visualize each subspace $Z_i$ separately by only varying one generative factor and keeping all else fixed. We also visualized the learned representations for a subset of transitions corresponding to the elementary generative transitions for each subgroup in Figure~\ref{fig:repr_xytheta}.

\begin{figure}[ht]
    \centering
    \includegraphics[width=\textwidth]{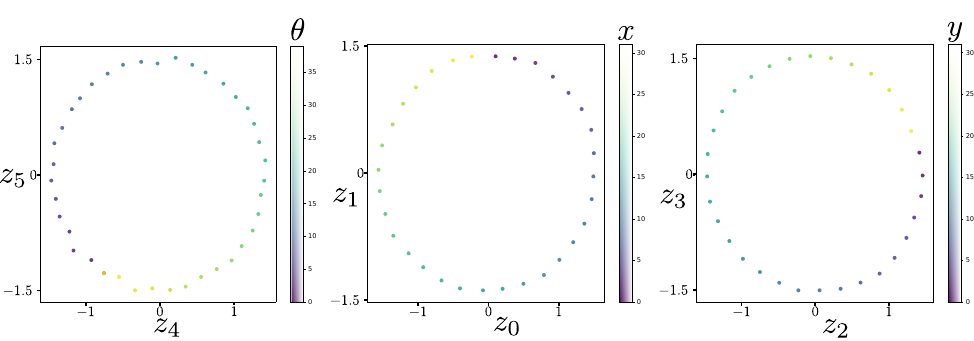}
    \caption{Visualization of the $6$D embedding vectors for the heart dataset. For each visualized $2$D subspace, we only vary the latent represented by the subspace.}
    \label{fig:xytheta}
\end{figure}

\begin{figure}[ht]
    \centering
    \includegraphics[width=\textwidth]{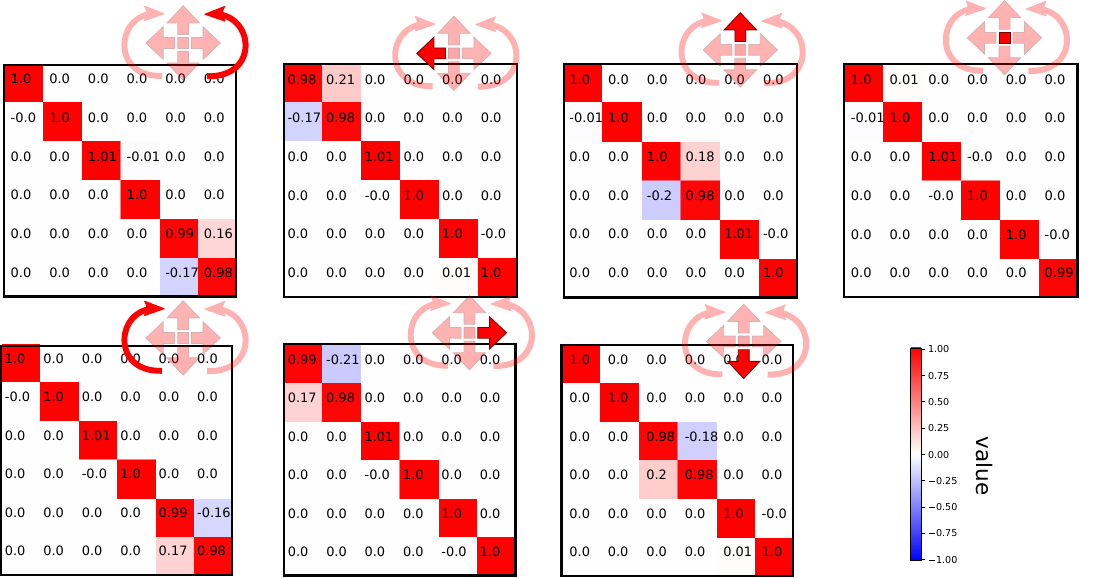}
    \caption{Evaluation of the learned group representation for the heart dataset. The identity (upper left) and generative transitions of each subgroup yields disentangled block rotation matrices.}
    \label{fig:repr_xytheta}
\end{figure}

\subsection{Rollouts prediction}\label{app:exp-rollout}
We adapt the setting of multi-step prediction as in \cite{Quessard2020}, where agents can perform multiple simple actions to the object.
In our experiment, we allow the agent to control the object in the dSprite dataset with 7 actions.
Namely, translation in the x-y axes, rotation in both directions (clockwise, counter-clockwise), and idle.
Each action corresponds to an increment/decrement in one of the generating factors of the dataset, except for idle, which does nothing.
Additionally, we use the heart shape from the dataset to fully utilize the orientation latent factor.

We train each method with a set of pre-generated set of 2-step trajectories and evaluate on a hold-out pre-generated set of 128-step trajectories.
For each trajectory, we begin by sampling a random initial state (x,y position and orientation) from all possible states.
After that, we sample actions uniformly from the 7 possible actions at each step until the number of steps is satisfied.

For the Rotations method \citep{Quessard2020} and HAE, each action is represented using a matrix $\rho(g)$, and the transition in the latent space is simply $z_{t+1} = \rho(g_t)z_t$.
For the Unstructured method, we use a 2 layer MLP of size [128, 128] to model the transition by $z_{t+1} = f_\theta(z_t, g_t)$, where we concatenate the latent vector $z_t$ and the one-hot encoding of the action $a_t$.

The reconstruction loss is the same for all three methods, as described in Section~\ref{sec:theory}.
For HAE, we additionally add the latent prediction loss $\mathcal{L}_{pred}$ as described in Section~\ref{sec:theory}.
We increase $\gamma$ to 1600 which we found to be more stable when matrices are directly parameterized instead of mapped from MLPs.
For the Rotations method, an additional entanglement loss $\mathcal{L}_{ent}$ is required to encourage each matrix to act on the fewest dimensions of the latent space, which is equal to
\[
    \mathcal{L}_{ent} = \sum_g\sum_{(i,j)\neq(\alpha,\beta)} |\theta_{i,j}^g|^2 \quad \text{with}\quad \theta^g_{\alpha,\beta} =\max_{i,j} |\theta^g_{i,j}|.
\]
For the Unstructured method, we only use the reconstruction loss and no additional terms.

Aside from the action sampling scheme described above, we also perform another rollout experiment using the sampling strategy described in Section~\ref{sec:sampling} where actions were sampled around the identity uniformly. 
We use $\varphi=id$. 
For simplicity, we reduce the range of the actions from $\llbracket -10,10 \rrbracket$ to $\llbracket -3,3 \rrbracket$, also, we only consider x-y translation in this additional experiment.
The training process is the same as described above, where we train and test on offline pre-generated datasets of 2-step and 128-step trajectories.
\begin{figure}[ht]
    \centering
    \includegraphics[width=0.6\textwidth]{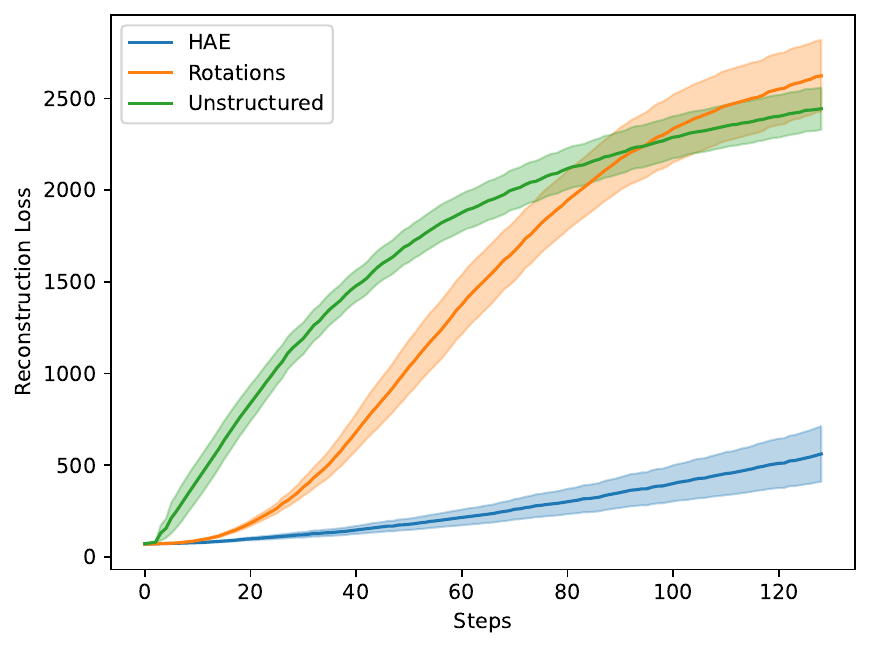}
    \caption{Step-wise reconstruction loss for the $SO(2)\times SO(2) \times SO(2)$ experiment. Lines and shadings represent the mean and one standard error over 15 seeds.}
    \label{fig:recon_unif}
\end{figure}

\subsection{Multi-objects}\label{app:exp-multiobjects}
We use a similar dataset to the one described in section \ref{app:exp-single-shape} except that we use all three shapes: Heart, Ellipse and Square. Because we are considering the action of the same group, observation sequences that start with a given shape have the same shape throughout at different positions. We use a HAE model with a representation of dimension $8$, trained for disentanglement using the soft block sparsity regularization described in section~\ref{sec:soft-dis}. The model learns representations of the observations that span $5$ dimensions visualized in Figure~\ref{fig:muliobject-e}, where $4$ representation units contain information about pose with regard to the translation group, while the remaining $4$ dimensions encode shape along a single principal component.
The associated learned group representation $\rho$ can be visualized for a neighbourhood of the identity in Figure~\ref{fig:rho-multishape}.

% , where the last latent is acted on trivially, meaning that the group representation keeps it unchanged. This is equivalent to having a representation of the form $\rho = \rho_x \oplus \rho_y \oplus 1$. We also project the $5$D encodings of the dataset on a $2$D space through Random Matrix Projection. Note that although the three different tori appear of different sizes, it is only an artifact of the projection while the tori are identical.

\begin{figure}[ht]
    \centering
    \includegraphics[width=0.6\linewidth]{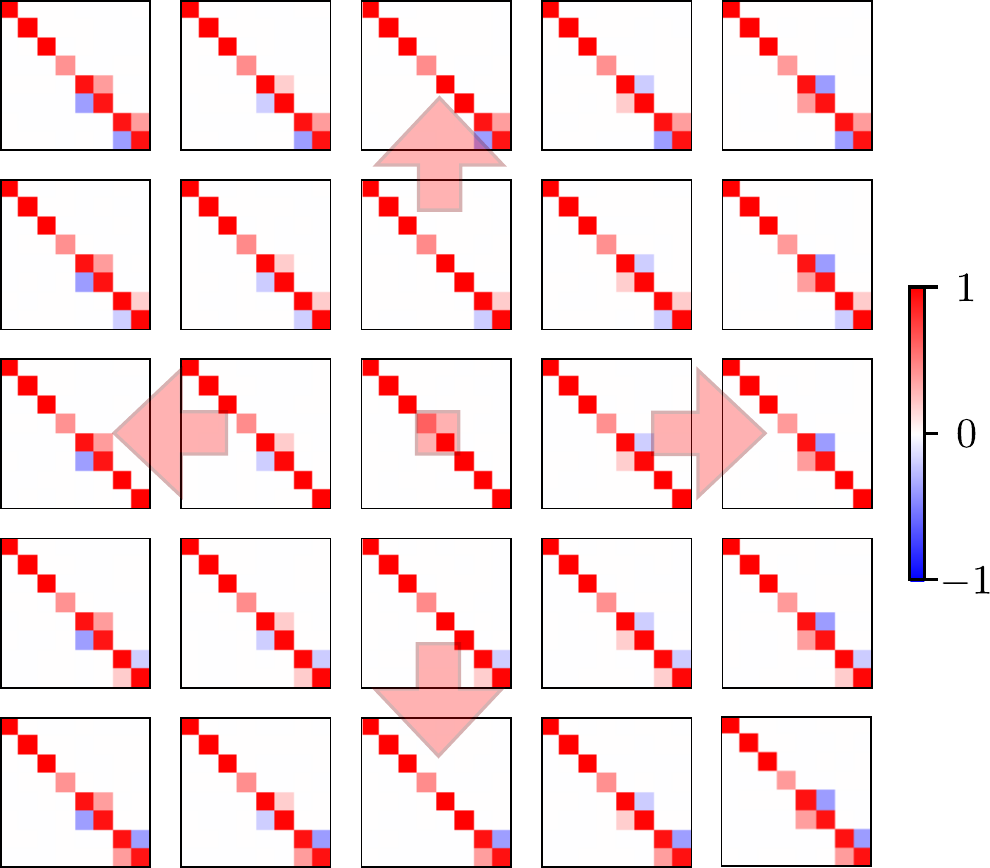}
    \caption{HAE trained on multiple dSprites  shapes. Visualization of the group representation matrices $\rho(\varphi(g))$ for $g$ in a square around the identity. The representation of the identity transition $g=0$ is found at the center of the grid and red arrows indicate the direction of the latent transition signal $g$. $\rho$ exhibits disentangled block diagonal structure and each non-trivial block shows the expected rotation matrices predicted in Equation~\ref{eq:expectedrho}}.
    \label{fig:rho-multishape}
\end{figure}

% ========================================================

\section{Experiment: Learning representations structured by $G=SO(3)\times SO(2)$}\label{app:exp-bunny}
\subsection{Data generation}
To generate the transitions dataset ${(o_t,g_t,o_{t+1},g_{t+1},o_{t+2})}_t$, we apply small relative rotations to the vertices of the bunny.obj and render it with $20$ colors equally spaced from the color hue wheel. The interaction $g_t$ decomposes into the relative rotation angles (Roll, Pitch, Yaw) and the color shift on the color wheel. The rotations are sampled from a continuous uniform distribution centered around the identity and maximum angles of $0.5 rad$, the color shifts are simultaneously sampled from a discrete uniform distribution centered around $0$ and with a maximum color shift of $3$. The dataset we used contains $300K$ such interaction sequences.

\subsection{Hyperparameters}
\paragraph{Model architecture} We use a symmetrical architecture for the encoder and the decoder, summarized in Table~\ref{tab:network-bun}. The network was trained on the combined loss:
\[
\mathcal{L} = \mathcal{L}_{rec}^2(\rho,h) + \gamma \mathcal{L}_{pred}^2(\rho,h)
\]
We use the Binary Cross Entropy loss for the reconstruction term instead of the Mean Squared Error.

\begin{table}[h!]
    \centering
    \caption{Network architecture.}
    \begin{tabular}{cc}
        Parameter & Value \\ \midrule
        Conv. Channels & [64, 64, 64, 64] \\
        Kernel Sizes & [6, 4, 4, 4] \\
        Strides & [2, 2, 1, 1] \\
        Linear Layer Size & 1024 \\
        Activation & ReLU \\
        Latent space & 11\\
        $\gamma$ & 200\\
        Group representation dimensions & [2,3,3,3]
    \end{tabular}
    \label{tab:network-bun}
\end{table}

\paragraph{Training hyperparameters}
We trained the network using the hyperparameters summarized in Table~\ref{tab:hp-bun}.
\begin{table}[h!]
    \centering
    \caption{Training hyperparameters.}
    \begin{tabular}{cc}
        Parameter & Value \\ \midrule
        Optimizer & Adam \\
        Learning rate & 0.0005 \\
        Number of training sequences & 300000 \\
        Batch size & 300 \\
        Epochs & 501 \\
        Iterations per epoch & 600\\
    \end{tabular}
    \label{tab:hp-bun}
\end{table}% ========================================================

\subsection{Rollouts prediction}\label{app:exp-rollout-bunny}
Similar to dSprites, we also test our method's ability to predict longer rollouts than the $2-$steps rollouts encountered during training.
For simplicity, we only consider the group  $G=SO(3)$ of rotations without color shifts.
We use the same network architecture described in Table~\ref{tab:network-bun}.
The training set consists of 300,000 2-step trajectories, while the test set consists of 200 128-step trajectories.
We compare our method to the Unstructured method described in Section~\ref{app:exp-rollout}.

\section{Third-Party Software}
% ---------------------------
\subsection{Deep Learning Framework}
To implement our architecture we used the deep learning framework PyTorch. \cite{Ansel_PyTorch_2_Faster_2024}

% ---------------------------
\subsection{Hyperparameter Search}
We used the hyperparameter search utility provided in the hypnettorch project \url{https://github.com/chrhenning/hypnettorch/tree/master/hypnettorch/hpsearch} to perform a random grid search.

% ---------------------------
\subsection{Dataset}
In the presented experiments, we used the dSprites dataset \cite{dsprites17}. The dSprites dataset is an image dataset of white sprites on a black background, varying in shape (heart, ellipse, square), in scale ($6$ values), in orientation ($39$ values, cyclic), in $x$ and $y$ position ($32$ values each). We consider all factors besides shape to be cyclic, in particular for the $x$ and $y$ positions, we "glued" opposite borders of images into a torus. The resolution of the images is $64\times 64$ pixels.

For the 3D rotation experiment, we used the Stanford bunny \cite{turk1994zippered}, which was obtained from 3D scanning a ceramic figurine of a rabbit. The 3D model consists of 35947 vertices and 69451 triangles, 
for a more detailed description, please see \url{http://graphics.stanford.edu/data/3Dscanrep/}. The model is colored from $10$ equally spaced colors on the hue wheel. We render colored images of shape $[3,72,72]$.
We act on the $3D$ model by applying small rotations and small translations of its color on the hue wheel.

\section{Computational resources}
The experiments were performed on an NVIDIA GeForce RTX 3090 and A100 GPUs, and training a model takes approximately $20$ mins for the dSprites dataset and $3-5$ hours for the $3D$ dataset.

\section{Societal Impact}
This work proposes new findings in basic research. 
To the best of our knowledge, this work does not have immediate applications with a negative societal impact.

% \end{document}

\end{document}